\documentclass[12pt, final]{l4dc2022}

\title[Safe Autonomous Navigation for Systems with Learned SE(3) Hamiltonian Dynamics]{Safe Autonomous Navigation for Systems with Learned SE(3) Hamiltonian Dynamics}

\usepackage{times}

\usepackage{amsmath,amssymb,amsfonts, dsfont} 		
\usepackage{mathtools} 
\usepackage{graphicx,wrapfig,adjustbox}




\author{\Name{Zhichao Li}\thanks{These authors contributed equally.} \Email{zhichaoli@ucsd.edu}\\
 \Name{Thai Duong}\footnotemark[1] \Email{tduong@ucsd.edu}\\
 \Name{Nikolay Atanasov} \Email{natanasov@ucsd.edu} \\
 \addr Department of Electrical and Computer Engineering, University of California San Diego, La Jolla, CA 92093%
}



%

\newtheorem{problem}{Problem}
\newtheorem*{problem*}{Problem}
%
%


\newcommand{\calD}{{\cal D}}

\newcommand{\calF}{{\cal F}}

\newcommand{\calH}{{\cal H}}

\newcommand{\calL}{{\cal L}}

\newcommand{\calO}{{\cal O}}
\newcommand{\calP}{{\cal P}}

\newcommand{\calS}{{\cal S}}
\newcommand{\calT}{{\cal T}}
\newcommand{\calU}{{\cal U}}


\newcommand{\frakp}{{\mathfrak{p}}}
\newcommand{\frakq}{{\mathfrak{q}}}


\newcommand{\bfa}{\mathbf{a}}

\newcommand{\bfc}{\mathbf{c}}
\newcommand{\bfd}{\mathbf{d}}
\newcommand{\bfe}{\mathbf{e}}
\newcommand{\bff}{\mathbf{f}}
\newcommand{\bfg}{\mathbf{g}}

\newcommand{\bfp}{\mathbf{p}}
\newcommand{\bfq}{\mathbf{q}}
\newcommand{\bfr}{\mathbf{r}}

\newcommand{\bfu}{\mathbf{u}}
\newcommand{\bfv}{\mathbf{v}}
\newcommand{\bfw}{\mathbf{w}}
\newcommand{\bfx}{\mathbf{x}}
\newcommand{\bfy}{\mathbf{y}}

\newcommand{\bfzeta}{{\boldsymbol{\zeta}}}

\newcommand{\bftheta}{{\boldsymbol{\theta}}}

\newcommand{\bfpi}{{\boldsymbol{\pi}}}

\newcommand{\bfomega}{{\boldsymbol{\omega}}}

\newcommand{\bfell}{{\boldsymbol{\ell}}}


\newcommand{\bfB}{\mathbf{B}}

\newcommand{\bfG}{\mathbf{G}}

\newcommand{\bfI}{\mathbf{I}}
\newcommand{\bfJ}{\mathbf{J}}
\newcommand{\bfK}{\mathbf{K}}

\newcommand{\bfM}{\mathbf{M}}

\newcommand{\bfR}{\mathbf{R}}


\newcommand{\bbR}{\mathbb{R}}




\newcommand{\prl}[1]{\left(#1\right)}
\newcommand{\brl}[1]{\left[#1\right]}
\newcommand{\crl}[1]{\left\{#1\right\}}


\DeclareMathOperator{\tr}{tr}
\DeclareMathOperator*{\argmax}{argmax}   

\DeclareMathOperator*{\diag}{diag}

\DeclarePairedDelimiterX{\norm}[1]{\lVert}{\rVert}{#1}


\newcommand*{\intF}{\text{int}(\calF)}

\newcommand*{\lpg}{\bar{\bfg}}
\newcommand*{\LS}{\mathcal{LS}}



\newcommand*{\nInput}{p}
\newcommand*{\nOutput}{m}



%

\newcommand{\NEWTD}[1]{{\color{black}#1}}
\newcommand{\NEWWTD}[1]{{\color{black}#1}}
\newcommand{\NEWZL}[1]{{\color{black}#1}}

\begin{document}

\maketitle

\begin{abstract}
Safe autonomous navigation in unknown environments is an important problem for mobile robots. This paper proposes techniques to learn the dynamics model of a mobile robot from trajectory data and synthesize a tracking controller with safety and stability guarantees. The state of a rigid-body robot usually contains its position, orientation, and generalized velocity and satisfies Hamilton's equations of motion. Instead of a hand-derived dynamics model, we use a dataset of state-control trajectories to train a translation-equivariant nonlinear Hamiltonian model represented as a neural ordinary differential equation (ODE) network. The learned Hamiltonian model is used to synthesize an energy-shaping passivity-based controller and derive conditions which guarantee safe regulation to a desired reference pose. We enable adaptive tracking of a desired path, subject to safety constraints obtained from obstacle distance measurements. The trade-off between the robot's energy and the distance to safety constraint violation is used to adaptively govern a reference pose along the desired path. Our safe adaptive controller is demonstrated on a simulated hexarotor robot navigating in an unknown environments.
\end{abstract}

\begin{keywords}%
dynamics learning, neural ODE network, reference governor, safe tracking control%
\end{keywords}


\section{Introduction}
Designing controllers that handle safety constraints and guarantee stability is an important problem in safety-critical applications of robotics, such as autonomous driving \citep{CBF_ames2014control, shalev2016safe}, locomotion \citep{CBF_ames2014rapidly} or medical robotics \citep{yip2014model}. Safety depends on the system states, governed by the system dynamics, and the environment constraints. This leads to two requirements for designing provably safe controllers: 1) the availability of an accurate dynamics model and 2) the satisfaction of time-varying safety constraints that are only known at runtime.




\NEWTD{
The first requirement has motivated machine learning techniques for system dynamics learning, e.g. based on Gaussian processes \citep{deisenroth2015gp,kabzan2019learning} or neural networks \citep{raissi2018multistep,chua2018deep}. For physical systems, recent works \citep{lutter2019deepunderactuated, zhong2020symplectic, duong21hamiltonian} design the model architecture to encode a Lagrangian or Hamiltonian formulation of robot dynamics \citep{lurie2013analytical,HolmBook}, which a black-box model might struggle to infer. \cite{zhong2020symplectic} use a differentiable neural ODE solver \citep{chen2018neural} to generate predicted state trajectories in a Hamiltonian formulation. A loss function is back-propagated through the ODE solver to update the model parameters. \cite{duong21hamiltonian} extend this approach by imposing both Hamiltonian dynamics and $SE(3)$ pose constraints on the ODE structure. A Hamiltonian-based model architecture also simplifies the design of stable regulation or tracking control by energy shaping \citep{zhong2020symplectic, duong2021learning, duong21hamiltonian}. The key idea of energy-based shaping, known as interconnection and damping assignment passivity-based control (IDA-PBC) \citep{van2014port}, is to inject additional energy via the control input into the system to achieve a desired total energy, minimized at a desired set point. 
%

The second requirement related to safety guarantees has gained significant attention in planning and control. Model predictive control (MPC) methods \citep{borrelli_MPC_book,NMPC_book,MPC_Automatica_Camacho2006_nlin,MPC_mayne2000constrained} include safety constraints in an optimization problem, which is typically solved by discretizing time with linearized dynamics.
Reachability-based techniques \citep{herbert2017fastrack, Funnel_lib, RTD_kousik2018bridging} work directly for nonlinear systems and offer strong safety guarantees but have high computation cost and scalability issues for high-dimensional systems.
%
Control barrier functions (CBFs) with quadratic programming (QP) \citep{CBF_ames2014control,CBF_ames2017TAC,CBF_ames2019ECC} offer an elegant and efficient framework for real-time safe control synthesis. However, construct a valid CBF \citep{CBF_ames2019ECC} that guarantees the feasibility of the QP problem \citep{CBF_xu2018constrained} at all times is challenging. 
Given a stabilizing regulation controller, reference governor techniques \citep{RG_bemporad1998reference,RG_kolmanovsky2014ACC_tutorial,RG_garone2016_ERG} maintain a virtual governor system to adaptively generate a regulation so that the system follows reference commands safely. Recent work \citep{RG_Omur_ICRA17, Li_SafeControl_ICRA20} achieves safe navigation in unknown environments but is limited to feedback-linearizable systems.

}



In this paper, we consider both requirements for rigid-body robot systems, whose states are described by their $SE(3)$ pose and generalized velocity. We assume that the robot dynamics are unknown but, as a physical system, satisfy Hamilton's equations of motion over the $SE(3)$ manifold. We consider a training set of state-control trajectories, from past experiments or collected by a human operator, and seek to safely track a desired position path with safety constraints obtained online from distance-to-obstacles measurements. We learn a \NEWWTD{$SE(3)$} Hamiltonian model of the system dynamics using a neural ODE network \citep{duong21hamiltonian}. \NEWTD{As the robot dynamics are equivariant to translation, we offset the trajectories to start from the origin and train a translation-equivariant Hamiltonian neural ODE model. The Hamiltonian structure of the learned model offers an energy-based regulation controller with the total energy of the system viewed as a Lyapunov function}. This, in turn, enables us to enforce safety constraints using reference governor techniques without the \NEWZL{need} to linearize the system dynamics. \NEWTD{Inspired by constraint embedding techniques
\NEWZL{\citep{RG_garone2016_ERG}},
we impose safety constraints, based on the sensor measurements, on the Lyapunov function. We use the trade-off between the distance from constraint violation and the system energy level to regulate a reference governor and achieve safe and stable position tracking in an unknown environment.}

\textbf{Contributions.}
In summary, the contributions of this paper are 1) \NEWWTD{a translation-equivariant $SE(3)$ Hamiltonian dynamics learning approach} and 2) a tracking control design for $SE(3)$ Hamiltonian systems with stability and safety guarantees. Our dynamics learning and tracking control techniques are demonstrated on a simulated hexarotor robot using a depth sensor to navigate in unknown environments.

\section{Problem Statement}
\label{sec:problem}

Consider a robot modeled as a rigid body with position $\bfp \in \bbR^3$, orientation $\bfR\in SO(3)$, body-frame linear velocity $\bfv \in \bbR^3$, and body-frame angular velocity $\bfomega \in \bbR^3$. Let $\frakq = [\bfp^\top\;\; \bfr_1^\top\;\; \bfr_2^\top\;\; \bfr_3^\top]^\top$ $\in \bbR^{12}$ denote the robot's generalized coordinates, where $\bfr_1$, $\bfr_2$, $\bfr_3 \in \bbR^3$ are the rows of the rotation matrix $\bfR$. Let $\bfzeta = [\bfv^\top\;\;\bfomega^\top]^\top \in \bbR^6$ denote the robot's generalized velocity. The generalized momentum $\frakp$ of the system is defined as:
\begin{equation}
\label{eq:momenta_Mtwist}
\frakp = \bfM(\frakq)\bfzeta \in \bbR^6,
\end{equation} 
where $\bfM(\frakq) \succ 0$ denotes a positive-definite $6 \times 6$ generalized mass matrix. Let $\bfx = (\frakq, \frakp) \in \bbR^{18}$ be the robot state. The Hamiltonian, $\mathcal{H}(\mathbf\frakq, \mathbf\frakp)$, captures the total energy of the system as the sum of the kinetic energy $\calT(\mathbf\frakq, \mathbf\frakp) =  \frac{1}{2}\mathbf\frakp^\top \bfM(\mathbf\frakq)^{-1} \mathbf\frakp$ and the potential energy $\calU(\frakq)$:
\begin{equation}
\label{eq:hamiltonian_def}
\mathcal{H}(\mathbf\frakq, \mathbf\frakp) = \calT(\mathbf\frakq, \mathbf\frakp) +\calU(\mathbf\frakq) =  \frac{1}{2}\mathbf\frakp^\top \bfM(\mathbf\frakq)^{-1} \mathbf\frakp + \calU(\mathbf\frakq).
\end{equation}
As a mechanical system, the time evolution of the state $\bfx$ is governed by Hamilton's equations of motion \citep{lee2017global, duong21hamiltonian}:
\begin{equation} \label{eq:Ham_sys_pf_nlin}
\dot{\bfx} = \bff(\bfx) + \bfG(\bfx) \bfu, \quad \bfx(t_0) = \bfx_0,
\end{equation}
where $\bfu \in \bbR^\nInput$ is the control input, 
\NEWZL{$\bff(\bfx) = \begin{bmatrix}
\bf0 & \mathbf\frakq^{\times} \\
-\mathbf\frakq^{\times\top} & \mathbf\frakp^{\times} 
\end{bmatrix}
\begin{bmatrix}
\nabla_{\frakq} \calH \\
\nabla_{\frakp} \calH 
\end{bmatrix}$, }
$\bfG(\bfx) = \begin{bmatrix} \bf0 \\ \bfB(\mathbf\frakq) \end{bmatrix}$, and $\bfB(\mathbf\frakq)\in \bbR^{6\times \nInput}$ is an input gain matrix. The operators $\mathbf\frakq^{\times}$, $\mathbf\frakp^{\times}$, and the hat map $\hat{\bfw}$ for $\bfw \in \mathbb{R}^3$ are defined as:
\begin{equation*}
\mathbf\frakq^{\times} = \begin{bmatrix}
\bfR^\top\!\!\!\! & \bf0 & \bf0 & \bf0 \\
\bf0 & \hat{\bfr}_1^\top & \hat{\bfr}_2^\top & \hat{\bfr}_3^\top
\end{bmatrix}^\top\!\!\!\!, \quad \mathbf\frakp^{\times} = \begin{bmatrix} \mathbf\frakp_{\bfv}\\\mathbf\frakp_{\bfomega}\end{bmatrix}^{\times} \!\!\!\!= \begin{bmatrix}
\bf0 & \hat{\mathbf\frakp}_{\bfv}\\
\hat{\mathbf\frakp}_{\bfv} & \hat{\mathbf\frakp}_{\bfomega}
\end{bmatrix}, \quad \hat{\bfw} = \begin{bmatrix}
0 & -w_3 & w_2 \\
w_3 & 0 & -w_1 \\
-w_2 & w_1 & 0 
\end{bmatrix}.
\end{equation*}

We consider the case that the parameters of the Hamiltonian dynamics model in \eqref{eq:Ham_sys_pf_nlin}, including the mass $\bfM(\frakq)$, potential energy $\calU(\frakq)$, and input matrix $\bfB(\frakq)$, are unknown. Instead, we are given a trajectory dataset $\calD = \{t^{(i)}_{0:N}, \frakq^{(i)}_{0:N},\bfzeta^{(i)}_{0:N}, \bfu^{(i)}\}_{i = 1}^{D}$ consisting of $D$ sequences of generalized coordinates and velocities $(\frakq^{(i)}_{0:N},\bfzeta^{(i)}_{0:N})$ at times $t_0^{(i)} < t_1^{(i)} < \ldots < t_N^{(i)}$, collected by applying a constant control input $\bfu^{(i)}$ to the system with initial condition $(\frakq^{(i)}_{0},\bfzeta^{(i)}_{0})$. We aim to learn the dynamics from the data set $\calD$ and design a control policy $\bfu = \bfpi(\bfx)$ such that the robot follows a desired reference path without violating safety constraints in an unknown environment. Let $\calO \subset \bbR^3$ and $\calF \coloneqq \bbR^{3} \setminus \calO$ denote the unsafe (e.g., obstacle) set and the safe (obstacle-free) set, respectively. Denote the interior of $\calF$ as $\intF$. We assume that $\calO$ is not known a priori but the robot can sense the distance $\bar{d}(\bfp, \calO)$ from its position $\bfp$ to $\calO$ locally with a limited sensing range $\beta > 0$:
\begin{equation} \label{eq:dyO}
\bar{d}(\bfp , \calO) \coloneqq \min \crl{d(\bfp, \calO), \beta},
\end{equation}
where $d(\bfp, \calO) \coloneqq \inf_{\bfa \in \calO} \norm{\bfp - \bfa}$ denotes the Euclidean distance from $\bfp$ to the set $\calO$. The safe autonomous navigation problem considered in this paper is summarized below.

\begin{problem}
\label{prob:main_prob}
Let $\calD = \{t^{(i)}_{0:N}, \frakq^{(i)}_{0:N},\bfzeta^{(i)}_{0:N}, \bfu^{(i)}\}_{i = 1}^{D}$ be a training dataset of state-control trajectories obtained from a robot with unknown Hamiltonian dynamics in \eqref{eq:Ham_sys_pf_nlin}. Let $\bfr: \brl{0,1} \mapsto \text{Int}\prl{\calF}$ be a continuous function specifying a desired position reference path for the robot. Assume that the reference path starts at the initial robot position at time $t_0$, i.e., $\bfr(0) = \bfp(t_0) \in \text{Int}\prl{\calF}$. Using local distance observations $\bar{d}(\bfp(t),\calO)$ of the unsafe set $\calO$ in an unknown environment, design a control policy $\bfpi(\bfx)$ so that the position $\bfp(t)$ of the closed-loop system converges asymptotically to $\bfr(1)$, while remaining safe, i.e., $\bfp(t) \in \calF$ for all $t \geq t_0$. 
\end{problem}

\section{Learning Hamiltonian Dynamics on the $SE(3)$ Manifold}
\label{subsec:ham_dyn_learning}
\subsection{Training a translation-equivariant $SE(3)$ Hamiltonian dynamics model}
\label{subsec:ham_neural_ode}
The training dataset $\calD = \{t^{(i)}_{0:N}, \frakq^{(i)}_{0:N},\bfzeta^{(i)}_{0:N}, \bfu^{(i)}\}_{i = 1}^{D}$ can be obatined by measuring the generalized coordinates and velocities of the system at times $t_0^{(i)} < t_1^{(i)} < \ldots < t_N^{(i)}$ using an odometry algorithm \citep{vio_benchmark,OdometrySurvey} or a motion capture system. The control input $\bfu^{(i)}$ can be generated by manually driving the robot or using an  existing controller. Since the system dynamics \NEWZL{do} not change if we shift the position $\bfp$ to any points in the world frame,  we offset the trajectories in the dataset $\calD$ so that they start from the position $\bf0$ and learn the system dynamics around the origin. \NEWZL{This is sufficient for control purposes, e.g. using the control design in Sec. \ref{sec:ham_control}, the input driving the system from state $\bfx$ with position $\bfp$ to desired state $\bfx^*$ with position $\bfp^*$ is the same as the one driving the system from state $\bfx$ with position $\bf0$ to the desired state $\bfx^*$ with the offset position $\bfp^* - \bfp$.}

Since the \NEWZL{momentum} $\frakp$ is not directly available from the dataset $\calD$, we use the time derivative of the generalized velocity, derived from \eqref{eq:momenta_Mtwist}:
\begin{equation}
\label{eq:hamiltonian_zetadot}
\dot{\bfzeta} =  \prl{ \frac{d}{dt} \bfM^{-1}(\mathbf\frakq) }\mathbf\frakp + \bfM^{-1}(\mathbf\frakq)\dot{\mathbf\frakp}.
\end{equation}
Eq.~\eqref{eq:Ham_sys_pf_nlin} and \eqref{eq:hamiltonian_zetadot} 
\NEWZL{describe the Hamiltonian dynamics } 
with unknown inverse generalized mass matrix $\bfM(\frakq)^{-1}$, input matrix $\bfB(\frakq)$, and potential energy $\calU(\frakq)$, which we aim to approximate by three neural networks $\bfM_\bftheta(\frakq)^{-1}, \bfB_\bftheta(\frakq)$ and $\calU_\bftheta(\frakq)$, respectively, with parameters $\bftheta$.

To optimize the parameters $\bftheta$, we use a neural ODE framework that encodes the Hamiltonian dynamics \eqref{eq:Ham_sys_pf_nlin} and \eqref{eq:hamiltonian_zetadot} with $\bfM_\bftheta(\frakq), \bfB_\bftheta(\frakq)$ and $\calU_\bftheta(\frakq)$ in the network structure (Fig. \ref{fig:fa_drone_learning}(a)). The forward pass rolls out the Hamiltonian dynamics \eqref{eq:Ham_sys_pf_nlin} and \eqref{eq:hamiltonian_zetadot} with the neural networks $\bfM_\bftheta(\frakq), \bfB_\bftheta(\frakq)$ and $\calU_\bftheta(\frakq)$ using a neural ODE solver \citep{chen2018neural} to obtain a predicted sequence $(\bar{\frakq}^{(i)}_{0:N}, \bar{\bfzeta}^{(i)}_{0:N})$ at times $t_0^{(i)} < t_1^{(i)} < \ldots < t_N^{(i)}$ for each $i = 1,\ldots, D$. The loss function is defined as $\calL = \sum_{i = 1}^D \sum_{n = 1}^N c(\frakq^{(i)}_{0:N},\bfzeta^{(i)}_{0:N}, \bar{\frakq}^{(i)}_{0:N}, \bar{\bfzeta}^{(i)}_{0:N})$, where the distance metric $c$ is defined as the sum of position, orientation, and velocity errors:
\begin{equation}
c \prl{\frakq^{(i)}_{0:N},\bfzeta^{(i)}_{0:N}, \bar{\frakq}^{(i)}_{0:N}, \bar{\bfzeta}^{(i)}_{0:N}} = c_{\bfp}(\bfp,\bar{\bfp}) + c_{\bfR}(\bfR,\bar{\bfR}) + c_{\bfzeta}(\bfzeta,\bar{\bfzeta}),
\end{equation}
where $c_{\bfp}(\bfp,\bar{\bfp}) = \| \bfp - \bar{\bfp}\|^2_2$, $c_{\bfzeta}(\bfzeta,\bar{\bfzeta}) = \| \bfzeta - \bar{\bfzeta}\|^2_2$, and $c_{\bfR}(\bfR,\bar{\bfR}) = \|\left(\log (\bar{\bfR} \bfR^\top)\right)^{\vee} \|_2^2$. The $\log$-map $\log (\cdot): SE(3) \mapsto \mathfrak{so}(3)$ is the inverse of the exponential map, returning a skew-symmetric matrix in $\mathfrak{so}(3)$ from a rotation matrix in $SE(3)$, and the $\vee$-map $(\cdot)^\vee : \mathfrak{so}(3) \mapsto \bbR^3$ is the inverse of the hat map $\hat{(\cdot)}$ in Sec. \ref{sec:problem}. The parameters $\bftheta$ are optimized via gradient descent by back-propagating the loss through the neural ODE solver. This is done using the adjoint method, where the gradient $\partial \calL/\partial \bftheta$ is calculated by a single call to a reverse-time ODE starting from $t = t_N$ at $(\frakq_N^{(i)}, \bfzeta_N^{(i)})$.

\subsection{Evaluation of the Hamiltonian dynamics model of a simulated hexarotor}
\label{subsec:ham_dyn_learning_evaluation}

\begin{figure}[t]
\centering
\begin{subfigure}
        \centering
        \includegraphics[height=30mm]{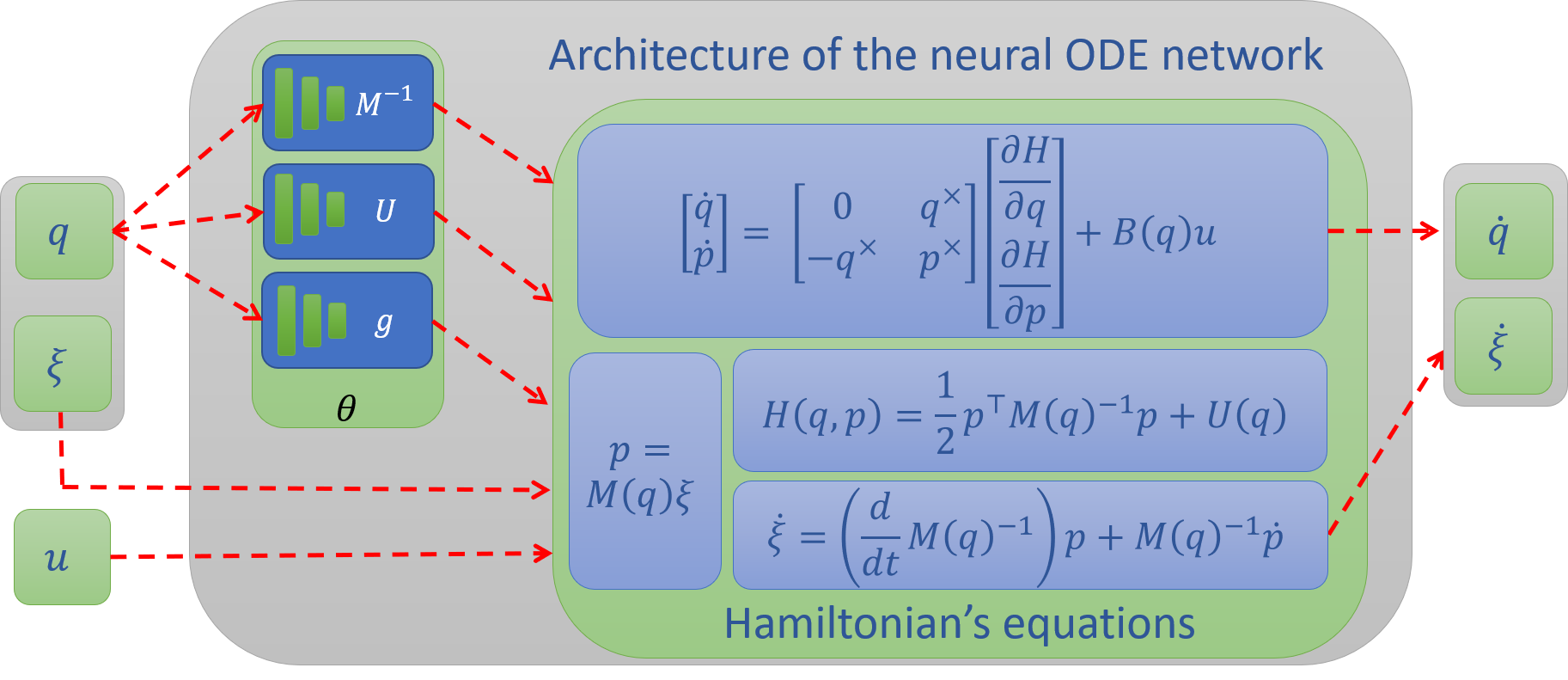}
        \label{fig:net_arch}
\end{subfigure}%
\hfill
\begin{subfigure}
        \centering
        \includegraphics[height=27mm]{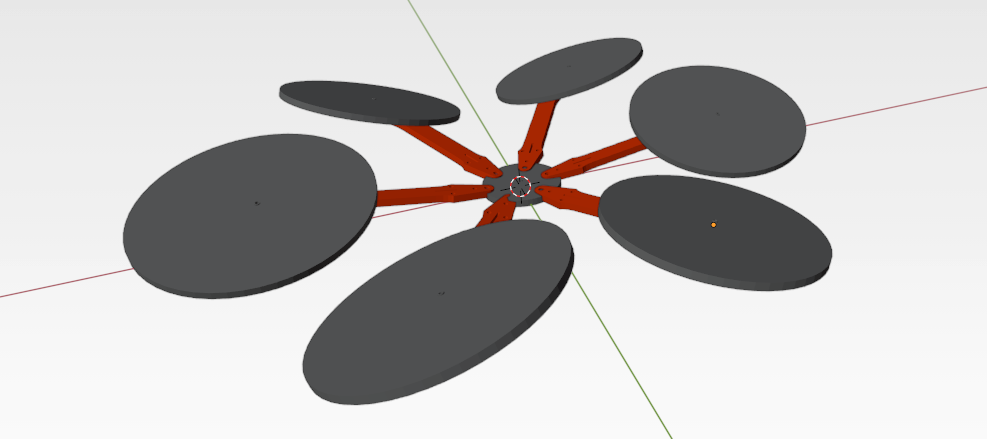}
        \label{fig:hexarotor}
\end{subfigure}%

\begin{subfigure}
        \centering
        \includegraphics[height=28mm]{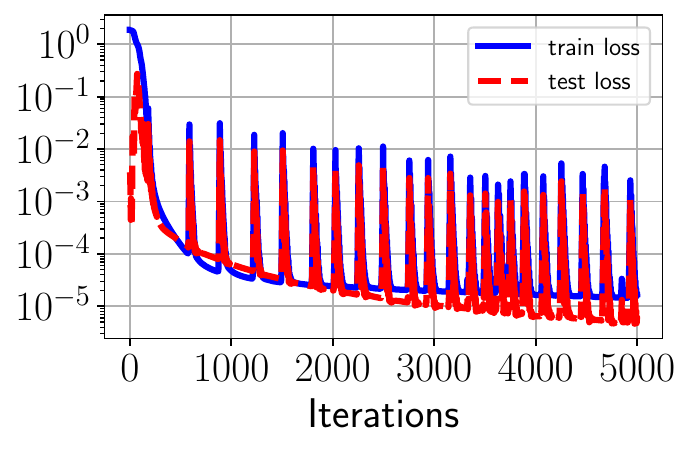}
        \label{fig:loss_log}
\end{subfigure}%
\hfill%
\begin{subfigure}
        \centering
        \includegraphics[height=28mm]{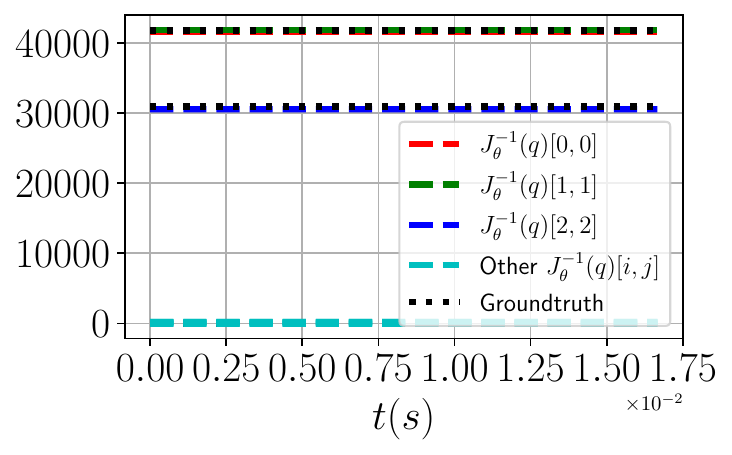}
        \label{fig:learned_inertia}
\end{subfigure}%
\hfill%
\begin{subfigure}
        \centering
        \includegraphics[height=28mm]{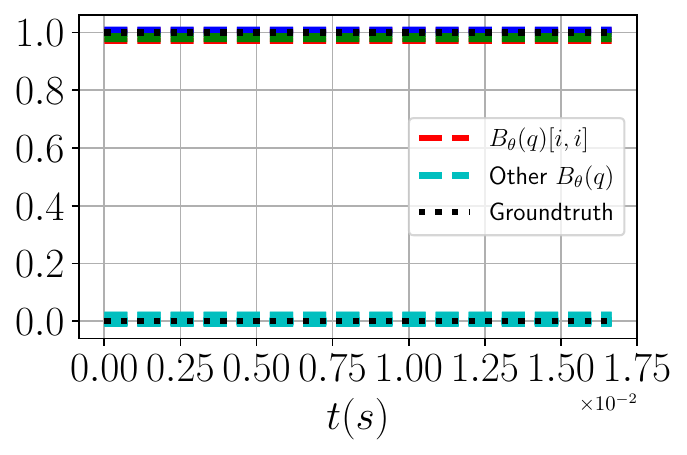}
        \label{fig:learned_g}
\end{subfigure}%
\caption{$SE(3)$ Hamiltonian neural ODE network: (a) network architecture, (b) simulated hexarotor for evaluation, (c) training loss, (d) learned inverse inertia $\bfJ_\bftheta(\frakq)^{-1}$, and (e) learned input matrix $\bfB_\bftheta(\frak{q})$ along a test trajectory, evaluated on the simulated hexarotor.}
\label{fig:fa_drone_learning}
\end{figure}

We consider a simulated hexarotor unmanned aerial vehicle (UAV) (Fig. \ref{fig:fa_drone_learning}(b)) with fixed-tilt rotors pointing in different directions (\cite{hexarotor}) modeled as a fully-actuated rigid body with mass $m = 0.027$ and inertia matrix $\bfJ = 10^{-5}\text{diag}([2.4, 2.4, 3.2])$. The robot's ground-truth dynamics satisfy Hamilton's equations in \eqref{eq:Ham_sys_pf_nlin} with generalized mass $\bfM(\mathbf\frakq) = \diag(m\bfI, \bfJ)$, potential energy $\calU(\mathbf\frakq) = mgz$, and the input matrix $\bfB(\mathbf\frakq)= \bfI$. The control input $\bfu$ is a 6-dimensional wrench, including a 3-dimensional force and a 3-dimensional torque.
Since the mass $m$ of the hexarotor can be easily measured, we assume the mass $m$ is known, leading to a known potential energy $\calU(\frakq) = mg\begin{bmatrix}
0 & 0 & 1
\end{bmatrix} \bfp$, where $\bfp$ is the UAV position and $g\approx 9.8ms^{-2}$ is the gravitational acceleration. We approximate the inverse generalized mass matrix by $\bfM_\bftheta(\mathbf\frakq)^{-1} =\diag(m^{-1}\bfI, \bfJ_{\bftheta}^{-1}(\mathbf\frakq))$ and learn the inverse inertia matrix $\bfJ_{\bftheta}(\mathbf\frakq)^{-1}$ and the input matrix $\bfB_\bftheta(\frakq)$ from data.

\NEWWTD{As the dynamics model is translation-invariant, }we mimic manual flights in an area free of obstacles using a PID controller and drive the hexarotor from a random initial pose \NEWWTD{near the origin} to a desired \NEWZL{pose}, generating $18$ one-second trajectories. We shift the trajectories to start from the origin and create a dataset $\mathcal{D} = \{t_{0:N}^{(i)},\mathbf\frakq_{0:N}^{(i)}, \bfzeta_{0:N}^{(i)}, \bfu^{(i)})\}_{i=1}^D$ with $N = 5$ and $D = 432$. The Hamiltonian-based neural ODE network\footnote{Code: \url{https://thaipduong.github.io/SE3HamDL/}} is trained with the dataset $\calD$, as described in Sec. \ref{subsec:ham_dyn_learning}, for 5000 iterations and learning rate $10^{-3}$. Fig. \ref{fig:fa_drone_learning}(c) shows the loss function during training. Note that if we scale $\bfM_\bftheta(\frakq)$ and the input matrix $\bfB(\frakq)$ by a constant $\gamma$, the dynamics of $(\frakq, \bfzeta)$ in \eqref{eq:Ham_sys_pf_nlin} and \eqref{eq:hamiltonian_zetadot} do not change. Fig. \ref{fig:fa_drone_learning}(d) and \ref{fig:fa_drone_learning}(e) plot the scaled version of the learned inverse mass $\bfJ_\bftheta(\frakq)^{-1}$ and the input matrix $\bfB_\bftheta(\frakq)$, converging to the constant ground truth values.

\section{Safe Tracking using a Reference Governor}

In this section, we first describe a passivity-based regulation controller for arbitrary pose stabilization in Sec.~\ref{sec:ham_control}. We derive sufficient conditions for safety based on an invariant level set of the closed-loop system's Hamiltonian. In Sec.~\ref{sec:rg_ctrl}, we propose a reference governor control policy to adaptively generate a regulation pose along the desired path and achieve safe navigation.

\subsection{Passivity-based control for learned Hamiltonian dynamics}
\label{sec:ham_control}

Given the learned model of the system dynamics:
\begin{equation} \label{eq:ols_nln}
\dot{\bfx} = \bff_\bftheta(\bfx) + \bfG_\bftheta(\bfx) \bfu, \quad \bfx(t_0) = \bfx_0,
\end{equation}
we want to find a control policy that stabilizes the system to a desired equilibrium $\bfx^* \coloneqq (\mathbf\frakq^*, \bf0)$ with desired generalized coordinates $\frakq^* = (\bfp^*, \bfR^*)$ and zero momentum $\frakp^* = \bf0$.
We design a control policy $\bfu = \bfpi(\bfx,\bfx^*)$ to shape the total energy (Hamiltonian) of the closed-loop system so that it achieves a minimum at the desired state $\bfx^* = (\mathbf\frakq^*, \bf0)$. By injecting energy into the system through the controller $\bfu = \bfpi(\bfx,\bfx^*)$, we aim to achieve the following desired Hamiltonian:
\begin{equation}
\label{eq:desired_hamiltonian}
    \mathcal{H}_d (\bfx, \bfx^*) =  \frac{1}{2}k_\bfp(\bfp - \bfp^*)^\top(\bfp - \bfp^*) + \frac{1}{2} k_{\bfR}\tr(\bfI - \bfR^{*\top}\bfR) + \frac{1}{2}(\mathbf\frakp-\mathbf\frakp^*)^\top\bfM_\bftheta^{-1}(\mathbf\frakq)(\mathbf\frakp-\mathbf\frakp^*),
\end{equation}
where $k_\bfp$ and $k_{\bfR}$ are positive gains. 
\NEWZL{
We use interconnection and damping assignment passivity-based control (IDA-PBC) \citep{van2014port}: }
\begin{equation}
    \label{eq:stab_ctrl_law}
    \bfu = \bfB_\bftheta^{\dagger}(\mathbf\frakq)\left(\mathbf\frakq^{\times\top} \NEWZL{\nabla_\frakq \calU_\bftheta(\frakq)}  -\mathbf\frakp^{\times}\bfM_\bftheta^{-1}(\mathbf\frakq)\mathbf\frakp - \bfe(\mathbf\frakq,\mathbf\frakq^*)\right) +\prl{- \bfB_\bftheta^{\dagger} (\mathbf\frakq)  \bfK_\bfd\, \bfM_\bftheta^{-1}(\mathbf\frakq)\mathbf\frakp},
\end{equation}
where $\bfB_\bftheta^{\dagger}(\mathbf\frakq) = \left(\bfB_\bftheta^{\top}(\mathbf\frakq)\bfB_\bftheta(\mathbf\frakq)\right)^{-1}\bfB_\bftheta^{\top}(\mathbf\frakq)$ is the pseudo-inverse of $\bfB_\bftheta(\mathbf\frakq)$, $\bfK_\bfd = \diag(k_\bfv \bfI, k_\bfomega\bfI)$ is a damping gain with positive terms $k_\bfv$, $k_\bfomega$, and  $\bfe(\mathbf\frakq,\mathbf\frakq^*)$ is the error between $\mathbf\frakq$ and $\mathbf\frakq^*$:
\begin{equation}
\label{eq:coordinate_error}
\bfe(\mathbf\frakq,\mathbf\frakq^*) = \begin{bmatrix} \bfe_{\bfp}(\mathbf\frakq,\mathbf\frakq^*) \\ \bfe_{\bfR}(\mathbf\frakq,\mathbf\frakq^*) \end{bmatrix} = \begin{bmatrix} k_\bfp \bfR^\top(\bfp - \bfp^*) \\ \frac{1}{2}k_{\bfR}\prl{\bfR^{*\top}\bfR-\bfR^\top\bfR^{*}}^{\vee}\end{bmatrix}.
\end{equation}

\begin{lemma} \label{lem:lyap_ham}
If the input gain matrix $\bfB_\bftheta(\bf\frakq)$ of the system in \eqref{eq:ols_nln} is invertible, the control policy $\bfu = \bfpi (\bfx, \bfx^*)$ in \eqref{eq:stab_ctrl_law} always exists and asymptotically stabilizes the system to an arbitrary reference $\bfx^* = (\frakq^*, \bf0)$ with Lyapunov function given by the desired Hamiltonian $\calH_d(\bfx,\bfx^*)$ in \eqref{eq:desired_hamiltonian}.
\end{lemma}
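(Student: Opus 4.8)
The plan is to prove the claim in four steps: (i) verify that the control law \eqref{eq:stab_ctrl_law} is well-defined whenever $\bfB_\bftheta(\frakq)$ is invertible; (ii) show that the resulting closed-loop system is again port-Hamiltonian with energy function $\calH_d$; (iii) check that $\calH_d$ qualifies as a Lyapunov function, i.e. it has a strict minimum at $\bfx^*$ and is nonincreasing along closed-loop trajectories; and (iv) use LaSalle's invariance principle to upgrade stability to asymptotic stability.

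For step (i), invertibility of $\bfB_\bftheta(\frakq)$ gives $\bfB_\bftheta^\dagger(\frakq) = \bfB_\bftheta^{-1}(\frakq)$, a smooth matrix-valued function, so every term in \eqref{eq:stab_ctrl_law} is a smooth function of $\bfx$ and $\bfx^*$; hence $\bfpi(\bfx,\bfx^*)$ exists for all $\bfx$. For step (ii), I would substitute \eqref{eq:stab_ctrl_law} into \eqref{eq:ols_nln}, use the $SE(3)$ kinematics $\dot{\bfp} = \bfR\bfv$, $\dot{\bfR} = \bfR\hat{\bfomega}$, and the IDA-PBC matching conditions derived in \citet{duong21hamiltonian}, and verify that $\bfu_{ES}$ cancels the open-loop drift $\bff_\bftheta$ and replaces it by the (body-frame) gradient of $\calH_d$, while $\bfu_{DI}$ injects the damping $-\bfK_\bfd\bfM_\bftheta^{-1}(\frakq)\frakp$; the outcome is a closed loop of the form $\dot{\bfx} = \prl{\calJ_d(\bfx) - \calR_d(\bfx)}\partial\calH_d/\partial\bfx$ with $\calJ_d = -\calJ_d^\top$ and $\calR_d = \calR_d^\top \succeq 0$ encoding $\bfK_\bfd$. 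The crucial algebraic identity here is that the $\frakq$-gradient of the shaped potential appearing in $\bfu_{ES}$, once transformed through $\frakq^{\times\top}$, coincides with the error term $\bfe(\frakq,\frakq^*)$ of \eqref{eq:coordinate_error}; checking this requires differentiating $\tfrac12 k_{\bfR}\tr(\bfI - \bfR^{*\top}\bfR)$ on $SO(3)$ and tracking the $\vee$-map carefully, and this is the step I expect to be the main obstacle.

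For step (iii), the position term $\tfrac12 k_\bfp\norm{\bfp-\bfp^*}^2$ and the momentum term $\tfrac12(\frakp-\frakp^*)^\top\bfM_\bftheta^{-1}(\frakq)(\frakp-\frakp^*)$ are nonnegative and vanish exactly at $\bfp = \bfp^*$ and $\frakp = \bf0$ respectively (the latter because $\bfM_\bftheta^{-1}(\frakq) \succ 0$), and $\tfrac12 k_{\bfR}\tr(\bfI - \bfR^{*\top}\bfR) \ge 0$ with equality only at $\bfR = \bfR^*$ on the open set where $\tr(\bfR^{*\top}\bfR) \ne -1$; hence $\calH_d$ has a strict local minimum of value zero at $\bfx^*$. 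From the port-Hamiltonian form and $\partial\calH_d/\partial\frakp = \bfM_\bftheta^{-1}(\frakq)\frakp$, we get $\dot{\calH}_d = -\prl{\partial\calH_d/\partial\bfx}^\top\calR_d\prl{\partial\calH_d/\partial\bfx} = -\prl{\bfM_\bftheta^{-1}(\frakq)\frakp}^\top\bfK_\bfd\prl{\bfM_\bftheta^{-1}(\frakq)\frakp} \le 0$ since $\bfK_\bfd \succ 0$, so $\bfx^*$ is stable.

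Finally, for step (iv) I would restrict to a compact sublevel set of $\calH_d$ contained in the region where $\calH_d$ is positive definite and apply LaSalle's invariance principle. On the set $\crl{\dot{\calH}_d = 0}$ we have $\bfM_\bftheta^{-1}(\frakq)\frakp = \bf0$, i.e. $\frakp = \bf0$; along any trajectory remaining in this set, $\dot{\frakp} = \bf0$, and substituting $\frakp = \bf0$ into the closed-loop momentum equation makes all velocity-dependent and gyroscopic terms vanish and the gravity/drift terms cancel by construction of $\bfu_{ES}$, leaving $\dot{\frakp} = -\bfe(\frakq,\frakq^*)$. Thus invariance forces $\bfe(\frakq,\frakq^*) = \bf0$, which on the relevant domain holds iff $\bfp = \bfp^*$ and $\bfR = \bfR^*$; hence the largest invariant subset of $\crl{\dot{\calH}_d = 0}$ is $\crl{\bfx^*}$, and every trajectory in the sublevel set converges to $\bfx^*$. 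This would give asymptotic stability with Lyapunov function $\calH_d$, which is the claim of the lemma.
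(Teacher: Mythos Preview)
Your proposal is correct and follows essentially the same approach as the paper: both invoke invertibility of $\bfB_\bftheta(\frakq)$ for existence, cite the IDA-PBC matching from \citet{duong21hamiltonian} to obtain the closed-loop port-Hamiltonian error dynamics, verify positive definiteness of $\calH_d$ term by term, compute $\dot{\calH}_d = -(\bfM_\bftheta^{-1}\frakp)^\top\bfK_\bfd(\bfM_\bftheta^{-1}\frakp)\le 0$, and conclude via LaSalle. Your step~(iv) is in fact more explicit than the paper's, which simply asserts that the largest invariant subset of $\{\dot{\calH}_d=0\}$ is the desired equilibrium without the $\dot{\frakp}=-\bfe(\frakq,\frakq^*)$ calculation you sketch.
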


 \begin{proof}
     See Appendix~\ref{app:proof_lyap_ham}.
 \end{proof}


Next, given the closed-loop system:
\begin{equation} \label{eq:cls_nln}
\begin{aligned}
\dot{\bfx} &= \bff_\bftheta(\bfx) + \bfG_\bftheta(\bfx) \bfpi \prl{\bfx, \bfx^*}, \quad \bfx(t_0) = \bfx_0,
\end{aligned}
\end{equation}
we derive conditions on the initial state $\bfx_0$ under which the position $\bfp$ converges to $\bfp^*$ safely, remaining in the safe set $\calF$. We first define a dynamic safety margin (DSM) $\Delta E(\bfx, \bfx^*)$ for the Hamiltonian dynamics \eqref{eq:cls_nln}:
 
\begin{equation} \label{eq:pbf}
	\Delta E(\bfx, \bfx^*) \coloneqq \bar{d}{\,}^2(\bfp^*, \calO) 
	- 2 \mathcal{H}_d \prl{\bfx, \bfx^*} / k_\bfp, 
\end{equation} 
where $\bar{d}{\,}^2(\bfp^*, \calO)$ is the truncated distance to the unsafe set $\calO$ in \eqref{eq:dyO}. Given a fixed desired point $\bfx^*$, the DSM function measures the the trade-off between safety, measured by $\bar{d}{\,}^2(\bfp^*, \calO)$, and system energy, measured by $\mathcal{H}_d \prl{\bfx, \bfx^*}$ and allows us to find a positively forward invariant set $\calS(\bfx, \bfx^*)$ such that for any $\bfx_0 \in \calS(\bfx, \bfx^*)$, the position $\bfp$ converges to $\bfp^*$ while remaining in the safe set $\calF$.

\begin{proposition} \label{prop:safe_stab}
Controller~\eqref{eq:stab_ctrl_law} renders $\calS(\bfx, \bfx^*) \coloneqq \crl{\bfx \mid \Delta E(\bfx, \bfx^*)  \geq 0}$ a positively forward invariant set for system~\eqref{eq:cls_nln}. Furthermore, if $\bfx_0 \in \calS(\bfx, \bfx^*)$, then the state $\bfp(t)$ converges to $\bfp^*$ asymptotically without collisions, i.e., $d(\bfp(t), \calO) \geq 0$ for all $t \geq t_0$.
\end{proposition}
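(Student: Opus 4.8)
The plan is to read off from Lemma~\ref{lem:lyap_ham} that the desired Hamiltonian $\mathcal{H}_d(\bfx,\bfx^*)$ is a Lyapunov function for the closed-loop system~\eqref{eq:cls_nln}: it is positive definite about $\bfx^*$ and non-increasing along trajectories, $\dot{\mathcal{H}}_d(\bfx(t),\bfx^*)\le 0$. The key observation is that, for a \emph{fixed} reference $\bfx^*=(\frakq^*,\bf0)$, the term $\bar{d}{\,}^2(\bfp^*,\calO)$ in the DSM~\eqref{eq:pbf} is a constant (it depends only on $\bfp^*$, not on $\bfx$), so
\[
\calS(\bfx,\bfx^*) = \crl{\bfx \mid \mathcal{H}_d(\bfx,\bfx^*) \le \tfrac{k_\bfp}{2}\,\bar{d}{\,}^2(\bfp^*,\calO)}
\]
is exactly a sublevel set of $\mathcal{H}_d(\cdot,\bfx^*)$. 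Forward invariance is then immediate: if $\bfx(t_0)\in\calS$, monotonicity of $\mathcal{H}_d$ gives $\mathcal{H}_d(\bfx(t),\bfx^*)\le \mathcal{H}_d(\bfx(t_0),\bfx^*)\le \tfrac{k_\bfp}{2}\bar{d}{\,}^2(\bfp^*,\calO)$ for all $t\ge t_0$, i.e. $\Delta E(\bfx(t),\bfx^*)\ge 0$.

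For the safety claim I would bound the position error by the Hamiltonian. Since the orientation term $\tfrac12 k_\bfR\tr(\bfI-\bfR^{*\top}\bfR)$ is nonnegative (for $\bfR,\bfR^*\in SO(3)$ one has $\tr(\bfR^{*\top}\bfR)\le 3$) and the kinetic term $\tfrac12(\frakp-\frakp^*)^\top\bfM_\bftheta^{-1}(\frakq)(\frakp-\frakp^*)$ is nonnegative ($\bfM_\bftheta^{-1}\succ 0$), \eqref{eq:desired_hamiltonian} gives $\tfrac12 k_\bfp\|\bfp-\bfp^*\|^2 \le \mathcal{H}_d(\bfx,\bfx^*)$. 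Combining with $\bfx\in\calS$ and $\bar{d}(\bfp^*,\calO)\le d(\bfp^*,\calO)$ yields $\|\bfp-\bfp^*\|\le \bar{d}(\bfp^*,\calO)\le d(\bfp^*,\calO)$. Then for every $\bfa\in\calO$ the triangle inequality gives $\|\bfp-\bfa\|\ge \|\bfp^*-\bfa\|-\|\bfp-\bfp^*\|$, and taking the infimum over $\bfa\in\calO$ gives $d(\bfp,\calO)\ge d(\bfp^*,\calO)-\|\bfp-\bfp^*\|\ge 0$, so the trajectory never enters the interior of $\calO$.

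For asymptotic convergence, I would note that $\mathcal{H}_d(\cdot,\bfx^*)$ is radially unbounded in $\bfp$ and $\frakp$ (quadratic) with orientation ranging over the compact $SO(3)$, so $\calS$ is compact; it is forward invariant by the first step. I would then invoke Lemma~\ref{lem:lyap_ham} directly (equivalently, apply LaSalle's invariance principle on the compact invariant set $\calS$, using the characterization of $\{\dot{\mathcal{H}}_d=0\}$ established in the proof of that lemma) to conclude $\bfx(t)\to\bfx^*$, hence $\bfp(t)\to\bfp^*$.

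The only step requiring genuine care is the first observation — that the right-hand budget $\tfrac{k_\bfp}{2}\bar{d}{\,}^2(\bfp^*,\calO)$ is constant for fixed $\bfx^*$, which is precisely what turns $\calS$ into a Lyapunov sublevel set and makes forward invariance a one-line consequence; the rest is a routine energy-argument and triangle inequality. A secondary caveat in the convergence step is that stabilization on $SE(3)$ is at best almost-global, so I would lean on Lemma~\ref{lem:lyap_ham} rather than assert global asymptotic stability outright. (Once the reference governor of Sec.~\ref{sec:rg_ctrl} lets $\bfx^*$ vary in time, $\bar{d}{\,}^2(\bfp^*,\calO)$ is no longer constant and this sublevel-set argument must be revisited, which is why that section is treated separately.)
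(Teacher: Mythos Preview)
Your proposal is correct and follows essentially the same approach as the paper's proof: recognize $\calS$ as a sublevel set of the Lyapunov function $\mathcal{H}_d$ (since $\bar{d}{\,}^2(\bfp^*,\calO)$ is constant for fixed $\bfx^*$), obtain forward invariance from $\dot{\mathcal{H}}_d\le 0$, bound $\|\bfp-\bfp^*\|^2$ by $\tfrac{2}{k_\bfp}\mathcal{H}_d$ after dropping the nonnegative orientation and kinetic terms, and conclude safety via $\|\bfp-\bfp^*\|\le d(\bfp^*,\calO)$. Your write-up is somewhat more explicit than the paper's (you spell out the triangle inequality and the compactness needed for LaSalle), but the argument is the same.
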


 \begin{proof}
 See Appendix~\ref{app:proof_safe_stab}.
 \end{proof}

Using the results for a fixed $\bfx^*$ in this section, we develop a reference governor system to adaptive change $\bfx^*$ over time in Sec. \ref{sec:rg_ctrl} so that the robot can safely track a desired path.

\subsection{Reference governor design}
\label{sec:rg_ctrl}

We introduce a virtual system, called a \emph{reference governor} \citep{RG_bemporad1998reference} to adaptively track the path $\bfr$ defined in Sec.~\ref{sec:problem} and provide a time-varying reference $\bfx^*(t)$ for the actual system in \eqref{eq:cls_nln}. The motion of the governor system needs to be regulated to balance the energy of the Hamiltonian system with the distance to the unsafe set $\calO$, keeping the safety margin in \eqref{eq:pbf} positive. We define the governor as a first-order linear time-invariant system with state $\bfg(t) \in \bbR^3$ and dynamics:
\begin{equation} \label{eq:governor}
\dot{\bfg} = -k_g \prl{\bfg - \bfu_\bfg}, \quad \NEWZL{k_g > 0}.
\end{equation}
%
The governor input $\bfu_\bfg$ will be chosen to move the governor system along the reference path without violating the safety condition $\Delta E(\bfx, \bfx^*)  \geq 0$ obtained in Proposition~\ref{prop:safe_stab}. Define a \emph{local safe set} $\LS(\bfx,\bfg)$ as a region around the governor state $\bfg$ that does not violate safety:
\[
\LS(\bfx, \bfg) \coloneqq \crl{ \bfq \in \bbR^{\nOutput} \mid \norm{\bfq - \bfg}^2 \leq (1+\epsilon)^{-1} \Delta E(\bfx,\bfx^*)},
\]
where $\epsilon > 0$ is arbitrarily, ensuring that $\LS(\bfx,\bfg) \subseteq \intF$. The size of the local safe set determines how fast the governor can move along the reference path without endangering safety.

\begin{definition} \label{def:lpg}
A \emph{local projected goal} at system-governor state $(\bfx, \bfg)$ is a point $\lpg \in \LS(\bfx,\bfg)$ that is furthest along the reference path $\bfr$:
\begin{equation} \label{eq:lpg}
	\lpg = \bfr(\sigma^*),  \;\; \sigma^* = \argmax_{\sigma \in [0,1]} \crl{ \sigma \mid  \bfr(\sigma) \in \LS(\bfx,\bfg)}.
\end{equation}	
\end{definition}

Choosing the governor input as $\bfu_{\bfg} = \bar{\bfg}$ forces the governor to track the reference path adaptively, taking the safety condition $\Delta E(\bfx, \bfx^*)  \geq 0$ into account. Given the local projected goal $\bar{\bfg}(t) \in \bbR^3$ and the governor state $\bfg(t) \in \bbR^3$, we also generate a desired reference state $\bfx^*(t)$ for the system in \eqref{eq:cls_nln} by lifting $\bfg(t)$ to $\bbR^{18}$. We may choose $\bfg(t)$ as the desired position with zero desired velocity but, to provide guidance on the $SE(3)$ manifold, we need to also generate a desired orientation $\bfR^*(t)$. We construct a lifting function $\bfell: \calF \times \intF \mapsto \bbR^{18}$ to obtain $\bfx^* = \bfell(\bfg,\bar{\bfg})$ as:
\begin{equation} \label{eq:lift_funcs}
\bfell(\bfg, \lpg) = \begin{bmatrix}
\bfp^{*\top} & \bfr_1^{*\top} & \bfr_2^{*\top} &\bfr_3^{*\top}, \bf0^\top, \bf0^\top
\end{bmatrix}^\top,
\end{equation}
where $\bfp^* = \bfg$ and $\bfr_1^*$, $\bfr_2^*$, $\bfr_3^*$ are the rows of the matrix:
\begin{equation} \label{eq:rotRg}
	\bfR^*(\bfg, \lpg) = 
	\begin{cases}
	\bfI  & \text{if } \bfe_3 \times \bfc_1 = 0\\
	\brl{\bfc_1\;\; \bfc_2 \;\; \bfc_3} & \text{otherwise},
	\end{cases}
\end{equation}
with $\bfe_3 = \brl{0, 0, 1}^\top$, $\bfc_1 = (\lpg - \bfg) / \norm{\lpg - \bfg}$, $\bfc_2 = (\bfe_3 \times \bfc_1) / \norm{\bfe_3 \times \bfc_1}$ and $\bfc_3 = (\bfc_1 \times \bfc_2) / \norm{\bfc_1 \times \bfc_2}$. If $\bfg = \lpg$, the most recent backup of $\bfR^*$ or $\bfI$ may be used. 

Our safe tracking control design is visualized in Fig.~\ref{fig:structure}. It consists of two parts: 1) a first-order reference governor system with state $\bfg$ adaptively following the local projected goal $\lpg$ along the path $\bfr$ and 2) a closed-loop Hamiltonian system tracking the reference signal $\bfx^* = \bfell(\bfg, \lpg)$. Our main result is summarized in the following theorem.

\begin{figure}[t]
\centering
\begin{subfigure}
    \centering
    \includegraphics[height=37mm]{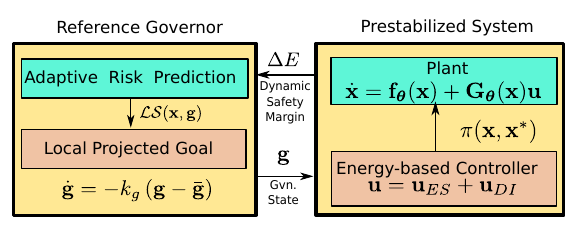}
\end{subfigure}%
\hfill%
\begin{subfigure}
    \centering
    \includegraphics[height=35mm]{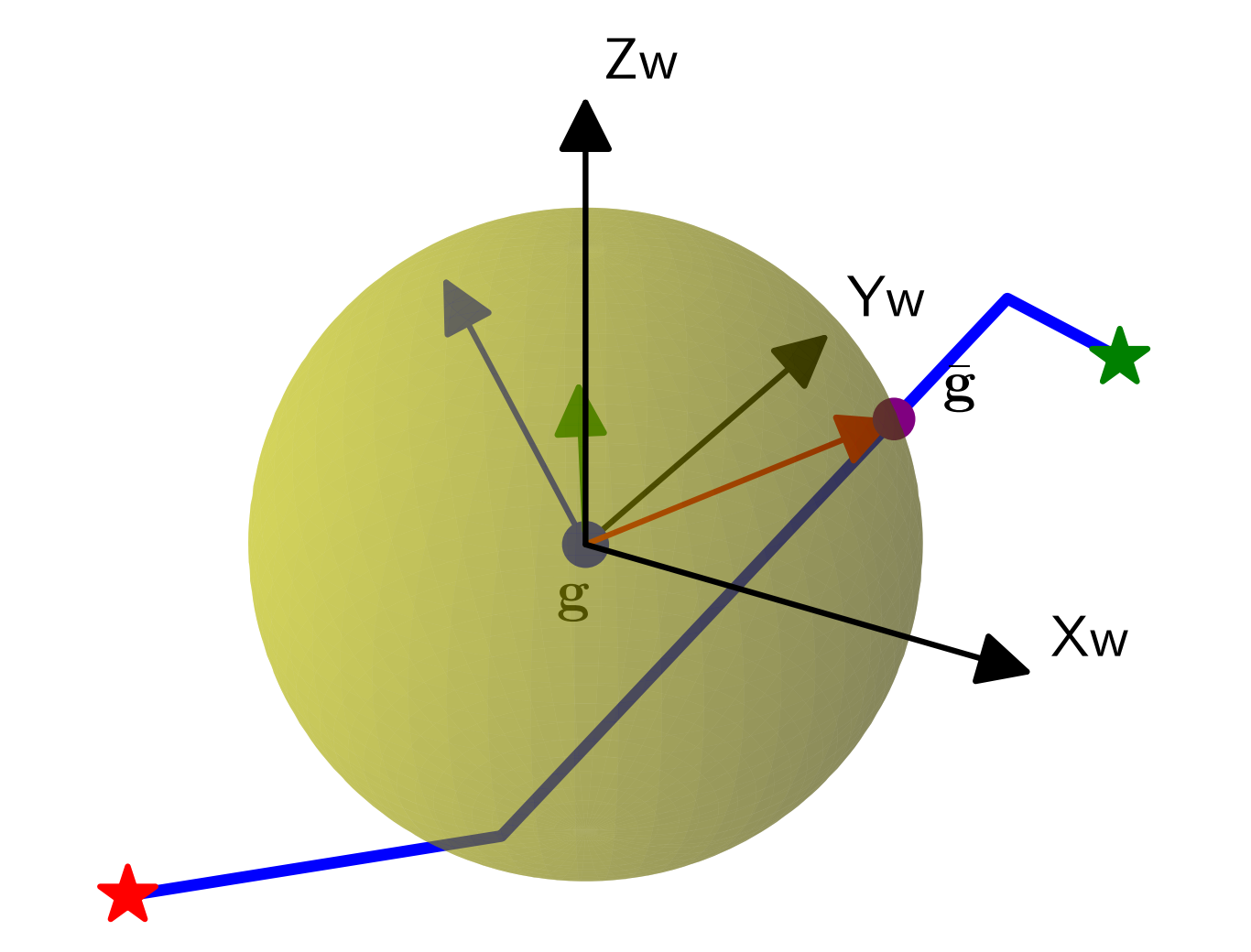}
\end{subfigure}%
\caption{(a) Structure of the reference-governor tracking controller: a reference governor with state $\bfg$ adaptively tracks a point $\bar{\bfg}$ along the desired path $\bfr$ and generates a time-varying equilibrium $\bfx^* = \bfell(\bfg, \lpg)$ for the closed-loop Hamiltonian system; (b) A local projected goal $\lpg$ (purple dot) is generated as the furthest intersection between the local safe set $\LS(\bfx, \bfg)$ (yellow sphere) and the path $\bfr$ (blue curve). Given $\bfg$ and $\lpg$, a desired equilibrium $\bfx^* = \bfell(\bfg, \lpg)$ is generated for the Hamiltonian system with orientation indicated by the red, green, and blue arrows, respectively.}
\label{fig:structure}
\end{figure}

\begin{theorem}\label{thm:main_result}
Given a reference path $\bfr$, consider the closed-loop Hamiltonian system in \eqref{eq:cls_nln} and the closed-loop reference governor system, $\dot{\bfg} = -k_g \prl{\bfg - \bar{\bfg}}$, with local projected goal $\bar{\bfg}$ provided in Definition~\ref{def:lpg}. Suppose that the initial state $(\bfx_0, \bfg_0)$ satisfies:
	\begin{equation} \label{eq:thm_ic}
	\Delta E \prl{\bfx_0, \bfell(\bfg_0, \lpg_0)} > 0,\quad  \bfg_0 = \bfr(0) = \bfp(t_0) \in \intF,
	\end{equation}
where $\Delta E(\bfx, \bfx^*)$ is a dynamic safety margin defined in \eqref{eq:pbf}. Then, the joint system state $(\bfx, \bfg)$ converges to $(\bfell(\bfr(1), \bfr(1)), \bfr(1))$ without violating the constraints, i.e., $\bfp(t) \in \calF$, $\forall t \geq t_0$.
\end{theorem}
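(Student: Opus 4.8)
The plan is to combine the fixed-equilibrium safety result of Proposition~\ref{prop:safe_stab} with a separate argument that the governor both makes progress along $\bfr$ and is never pushed into unsafe territory, using a cascade/time-scale-free invariance argument rather than an explicit Lyapunov function for the joint system. Concretely, I would argue that the set $\calS_{joint} \coloneqq \crl{(\bfx,\bfg) \mid \Delta E(\bfx, \bfell(\bfg,\lpg)) \geq 0,\ \bfg \in \intF}$ is positively invariant for the coupled dynamics, and that on this set $\bfg(t)\to\bfr(1)$ and $\bfx(t)\to\bfell(\bfr(1),\bfr(1))$. Since \eqref{eq:thm_ic} places the initial condition in the interior of this set, the theorem follows once invariance and convergence are in hand.

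First I would establish invariance of $\Delta E(\bfx,\bfx^*)\geq 0$ along the trajectory, where now $\bfx^* = \bfell(\bfg(t),\lpg(t))$ is itself time-varying. The key structural observation is that the governor input is $\bfu_\bfg = \lpg$ with $\lpg \in \LS(\bfx,\bfg)$, i.e. $\norm{\lpg - \bfg}^2 \le (1+\epsilon)^{-1}\Delta E(\bfx,\bfx^*)$; this is exactly the ``local safe set'' design constraint, and it is what prevents $\bfg$ from moving too aggressively. I would compute $\frac{d}{dt}\Delta E$: from \eqref{eq:pbf} it has a term $-\frac{2}{k_\bfp}\dot{\calH}_d$, which is $\le 0$ by Lemma~\ref{lem:lyap_ham}-type dissipation for the frozen $\bfx^*$, plus cross terms coming from $\dot{\bfx}^*$ (both $\dot{\bfg}$ and the implicit $\dot{\lpg}$) acting through $\partial \calH_d/\partial \bfx^*$ and through $\partial \bar d^2(\bfp^*,\calO)/\partial \bfp^*$. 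The governor speed bound $\norm{\dot\bfg} = k_g\norm{\bfg-\lpg} \le k_g (1+\epsilon)^{-1/2}\sqrt{\Delta E}$, together with Lipschitz-type bounds on $\bar d(\cdot,\calO)$ (1-Lipschitz) and on $\calH_d$ in $\bfx^*$, should show that whenever $\Delta E$ hits $0$ its time derivative is $\ge 0$, i.e. Nagumo's condition holds; hence $\Delta E \ge 0$ is maintained, and by Prop.~\ref{prop:safe_stab} applied pointwise (or its forward-invariance mechanism) the physical state stays in $\calF$. The membership $\lpg\in\LS(\bfx,\bfg)\subseteq\intF$ and $\bfg_0\in\intF$ then keeps $\bfg\in\intF$.

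Next I would handle convergence. On the invariant set, $\calH_d(\bfx(t),\bfx^*(t))$ together with a LaSalle-type argument shows the tracking error $\bfx - \bfx^*\to 0$; the subtlety is that $\bfx^*$ moves, so I would instead argue in two stages. Stage one: $\bfg$ is monotone-progressing in the sense that $\sigma^*(t)$ is nondecreasing and bounded by $1$, hence converges; I would show it cannot converge to $\sigma_\infty<1$, because there $\Delta E$ stays bounded away from $0$ (the path point $\bfr(\sigma_\infty)$ is in $\intF$, and as the system settles the Hamiltonian term shrinks), so $\LS(\bfx,\bfg)$ eventually contains a path point beyond $\sigma_\infty$ — contradiction. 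Thus $\sigma^*\to 1$, $\lpg\to\bfr(1)$, and the governor ODE $\dot\bfg = -k_g(\bfg-\lpg)$ (an exponentially stable filter tracking a convergent input) gives $\bfg\to\bfr(1)$. Stage two: with $\bfx^*(t)\to\bfell(\bfr(1),\bfr(1))$ a constant, standard cascade/asymptotic-autonomy arguments plus Lemma~\ref{lem:lyap_ham} give $\bfx\to\bfell(\bfr(1),\bfr(1))$.

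The main obstacle I expect is the invariance step under a \emph{time-varying} $\bfx^*$: the Nagumo/derivative computation for $\Delta E$ must carefully track every way $\dot{\bfx}^*$ enters, including the non-smooth dependence of $\lpg$ on $(\bfx,\bfg)$ through the $\argmax$ in \eqref{eq:lpg} and the switching and potential non-differentiability of $\bfR^*$ in \eqref{eq:rotRg} and of $\bar d(\cdot,\calO)$. I would sidestep the regularity of $\lpg$ by noting that only $\norm{\lpg-\bfg}^2 \le (1+\epsilon)^{-1}\Delta E$ is needed — the explicit direction/speed of $\lpg$ never appears, only $\dot\bfg$, whose magnitude is controlled by that same inequality — so the estimate closes without differentiating $\lpg$. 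For $\bar d$ and $\bfR^*$ I would work with upper Dini derivatives / Clarke subgradients, or equivalently observe that $\bar d^2$ is locally Lipschitz and its generalized gradient is bounded, which is all Nagumo's theorem for locally Lipschitz barriers requires. The secondary obstacle is ruling out $\sigma_\infty<1$ cleanly; I would make this rigorous by showing $\liminf_t \Delta E(\bfx(t),\bfell(\bfg(t),\bfr(\sigma_\infty))) > 0$ using that $\bfr(\sigma_\infty)\in\intF$ has positive clearance and that $\calH_d\to 0$ along the settling system, then invoking continuity of $\sigma\mapsto\bfr(\sigma)$ to fit a strictly larger $\sigma$ into $\LS$.
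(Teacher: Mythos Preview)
Your overall architecture (invariance of $\{\Delta E\ge 0\}$, then progress of $\sigma^*$, then cascade convergence) is reasonable and close in spirit to the paper, but the invariance step as you have written it has a concrete gap. You assert that in $\frac{d}{dt}\Delta E$ ``the explicit direction/speed of $\lpg$ never appears, only $\dot\bfg$.'' That is not true here: the lifted reference is $\bfx^*=\bfell(\bfg,\lpg)$, and the desired orientation $\bfR^*$ in \eqref{eq:rotRg} depends on the \emph{direction} $(\lpg-\bfg)/\norm{\lpg-\bfg}$, which then enters $\calH_d$ through the rotation term $\tfrac{1}{2}k_\bfR\tr(\bfI-\bfR^{*\top}\bfR)$ in \eqref{eq:desired_hamiltonian}. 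Hence $\tfrac{d}{dt}\calH_d$ carries a contribution from $\dot\bfR^*$, and $\dot\bfR^*$ is driven by changes in $\lpg$ that are \emph{not} bounded by $\norm{\lpg-\bfg}$ (the direction can vary fast even when the magnitude is small). Your Lipschitz/Dini machinery bounds $\partial\bar d^2/\partial\bfp^*$ and $\partial\calH_d/\partial\bfp^*$ against $\norm{\dot\bfg}\le k_g\sqrt{(1+\epsilon)^{-1}\Delta E}$, but it does not bound the $\dot\bfR^*$ term, so the Nagumo inequality at $\Delta E=0$ does not close as stated.

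The paper avoids this entirely with a simpler ``freezing'' argument instead of a derivative computation: by continuity $\Delta E$ cannot become negative without touching $0$ at some $T_0$; at that instant $\LS(\bfx,\bfg)$ collapses to $\{\bfg\}$, which forces $\lpg=\bfg$ and hence $\dot\bfg=0$. With the governor (and thus $\bfx^*$) momentarily frozen, Proposition~\ref{prop:safe_stab} applies verbatim, and strict dissipation $\dot\calH_d<0$ (for $\bfx\ne\bfx^*$) makes $\Delta E$ strictly positive immediately after $T_0$, re-enabling governor motion. This sidesteps any need to differentiate through $\lpg$ or $\bfR^*$. Your convergence sketch (monotone-by-construction $\sigma^*$, contradiction at $\sigma_\infty<1$, cascade to the terminal equilibrium) matches the paper's informal argument, though you should be aware that monotonicity of $\sigma^*$ is asserted rather than proved in both; if you want to make your version rigorous, that is a second place that needs work. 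The cleanest fix to your invariance argument is to adopt the paper's freezing mechanism in place of the Nagumo derivative estimate.
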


\begin{proof}
See Appendix~\ref{app:proof_safe_output_tracking}.
\end{proof}



\section{Evaluation}

\begin{figure}[t] 
	\centering
	\includegraphics[width=0.49\linewidth]{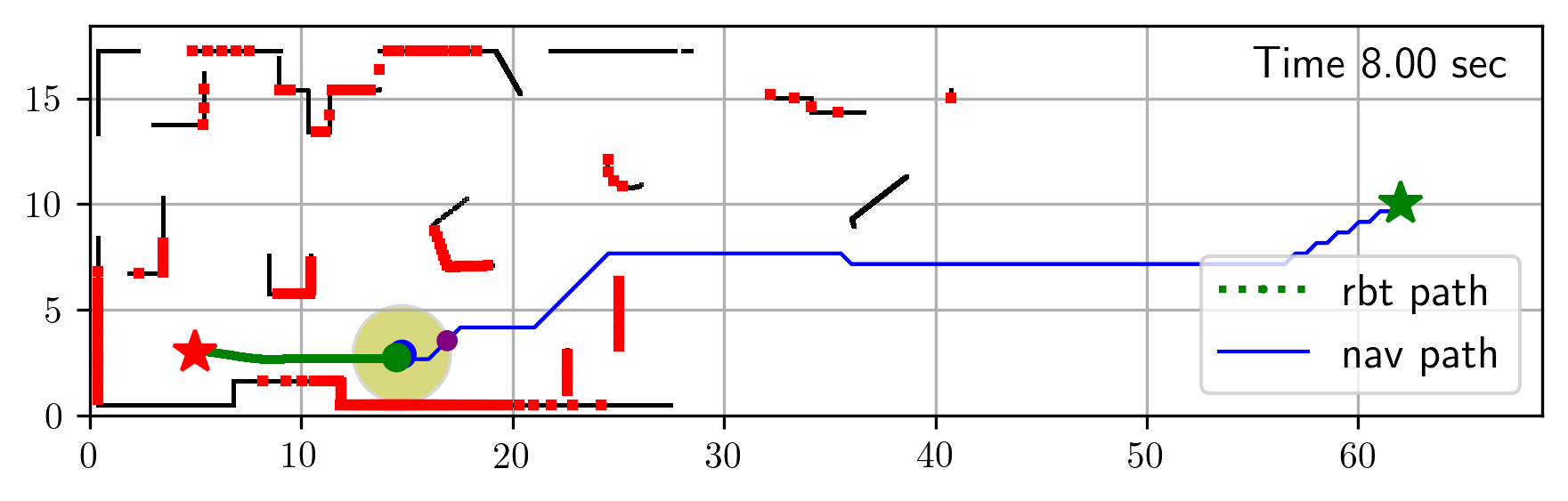}%
	\hfill%
	\includegraphics[width=0.49\linewidth]{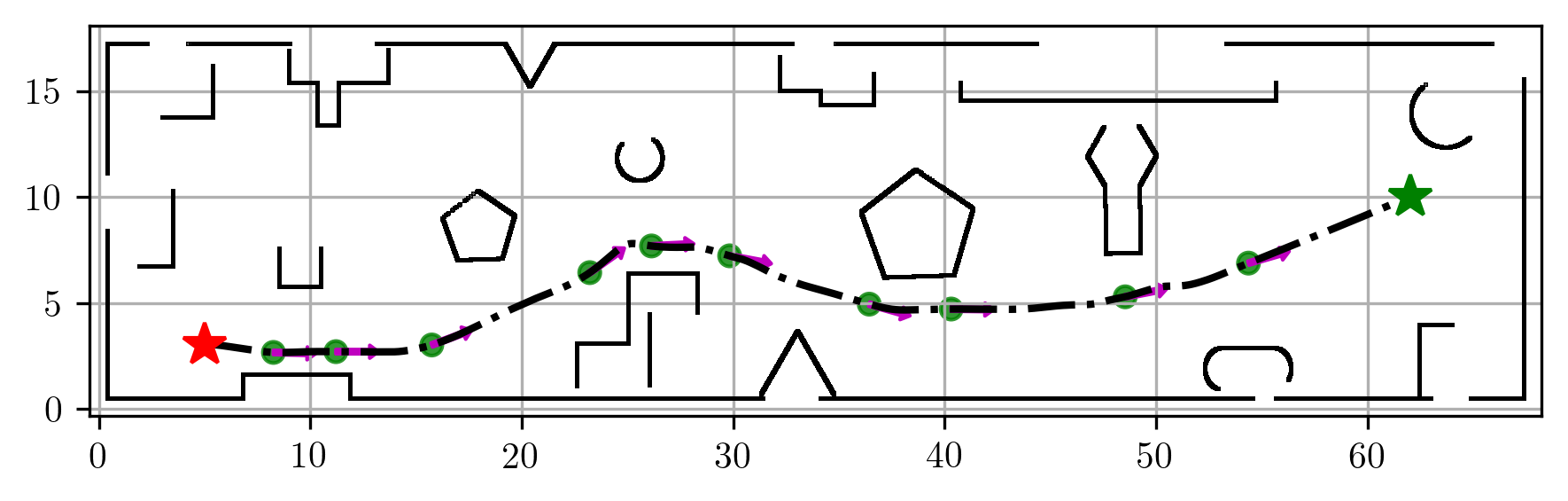}%
	\\
	\includegraphics[width=0.49\linewidth]{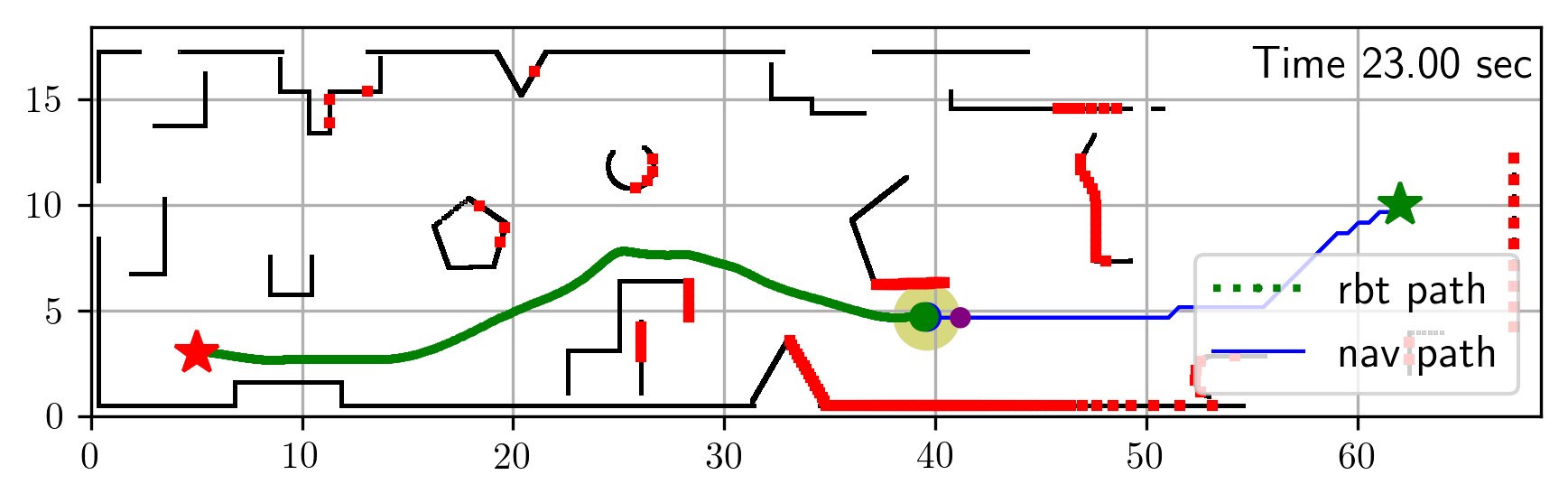}%
	\hfill%
	\includegraphics[width=0.49\linewidth]{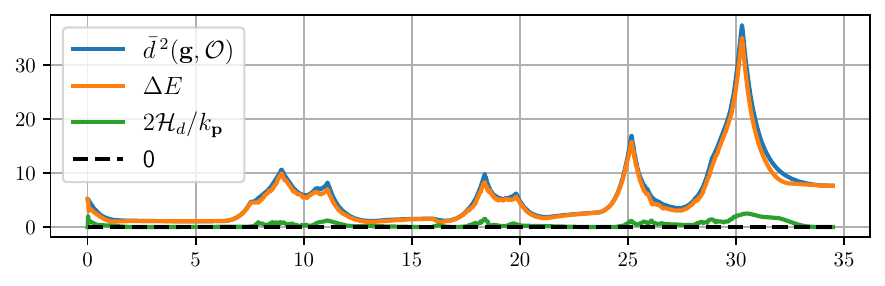}%
	\\
	\includegraphics[width=0.49\linewidth]{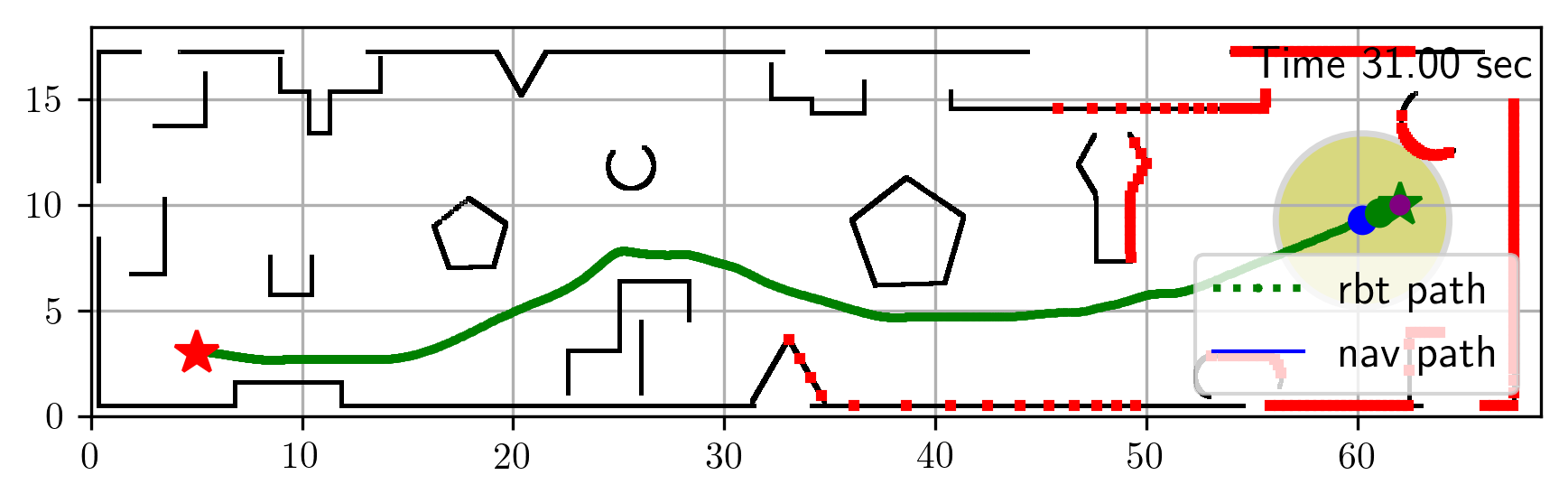}%
	\hfill%
	\includegraphics[width=0.49\linewidth]{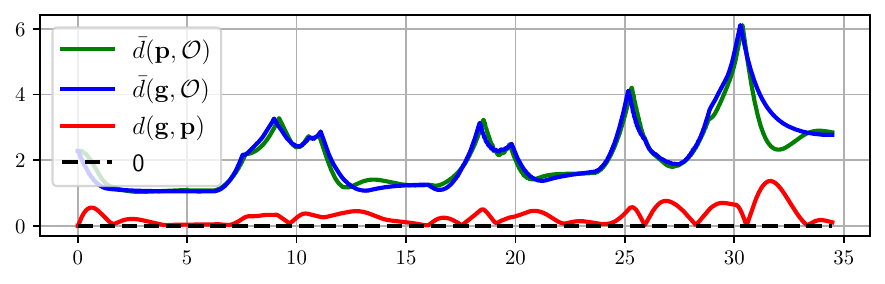}%
	\caption{Safe tracking control for a hexarotor aerial robot in an unknown cluttered environment. The hexarotor (green dot) navigates from a start location (red star) to a goal location (green star) while avoiding obstacles. The obstacles are sensed by a simulated LiDAR sensor (red points). On the left are snapshots of the environment showing the robot (green dot), lidar scans (red dots), and the unsafe set $\calO$ (black surfaces) at different times. The reference path (blue curve) is re-planned online from the governor position (blue dot) to the goal (green star) using an $A^*$ algorithm. The local projected goal $\lpg$ (purple dot) is computed based on the obstacle distance (gray ball) and the local safe set (yellow ball).
	On the upper right, the robot's heading (purple arrow) are plotted along the path. The middle right figure shows the dynamic safety margin $\Delta E$ in \eqref{eq:pbf} and the scaled desired Hamiltonian $2 \calH_d(\bfx, \bfx^*)/k_\bfp$ while the lower right figure plots the distance to the obstacles $d(\bfp(t), \calO)$ (green curve), showing that the safety constraints are never violated.}
	\label{fig:hexarotor_sim_mesh}
\end{figure}

\begin{figure}[th]
\centering
\begin{subfigure}
        \centering
        \includegraphics[width=0.45\linewidth,trim=10mm 10mm 10mm 10mm, clip]{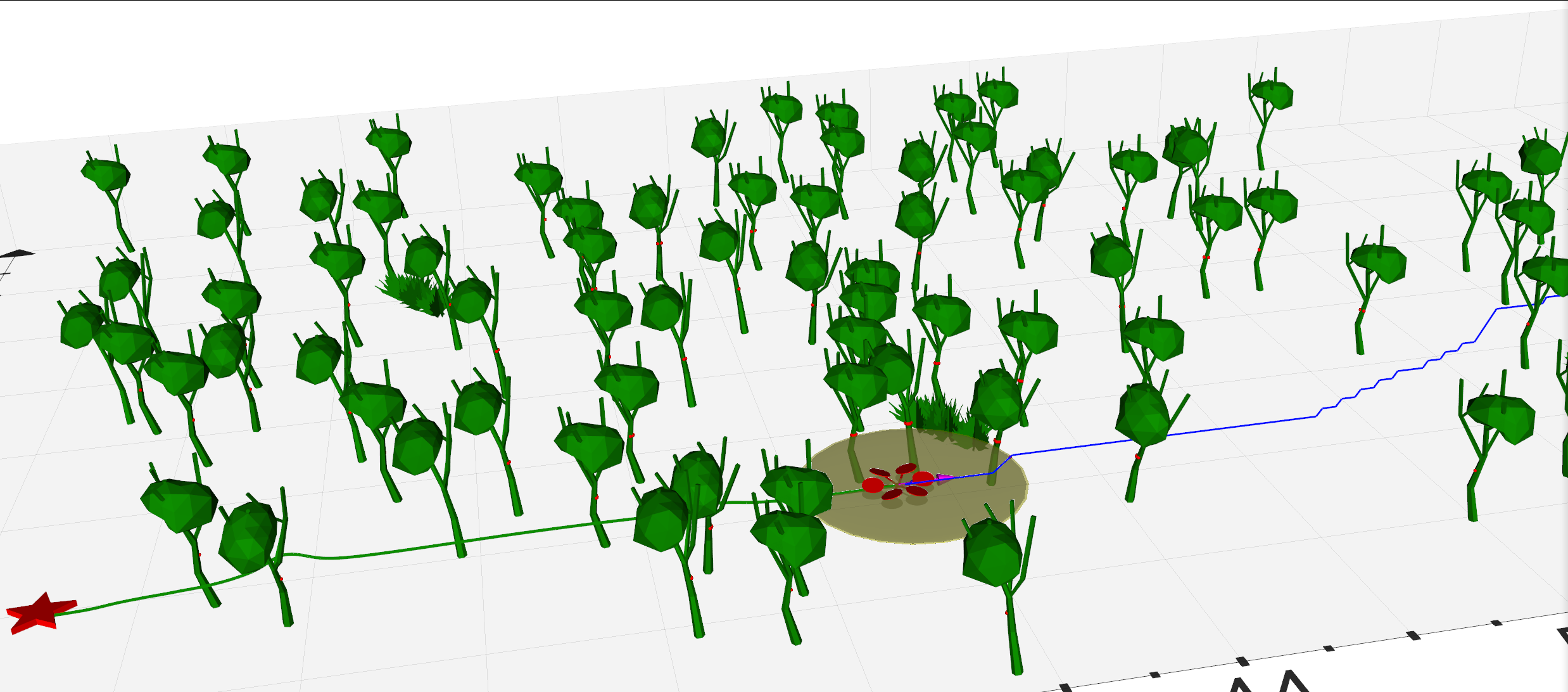}
\end{subfigure}%
\hfill
\begin{subfigure}
        \centering
        \includegraphics[width=0.45\linewidth,trim=10mm 10mm 10mm 10mm, clip]{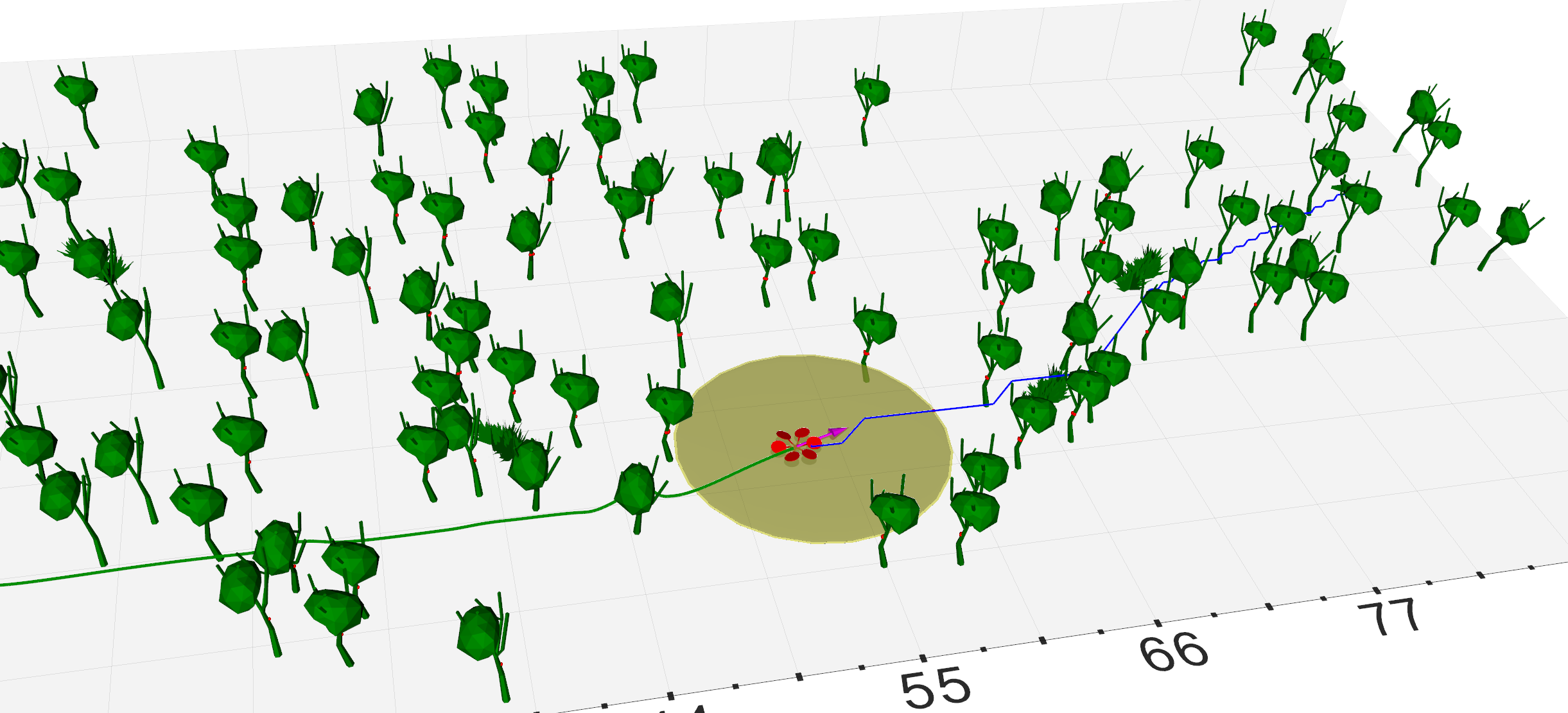}
\end{subfigure}%
\\
\begin{subfigure}
        \centering
        \includegraphics[height=20mm]{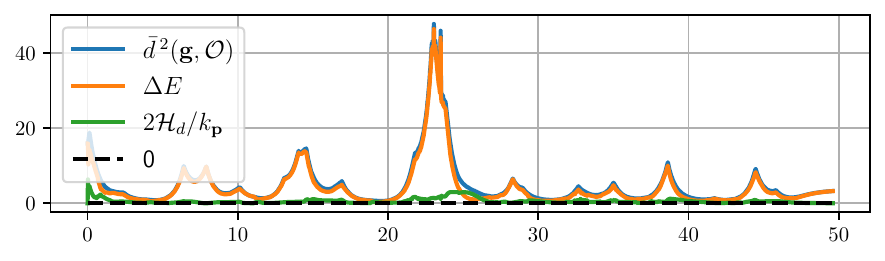}
\end{subfigure}%
\hfill
\begin{subfigure}
        \centering
        \includegraphics[height=20mm]{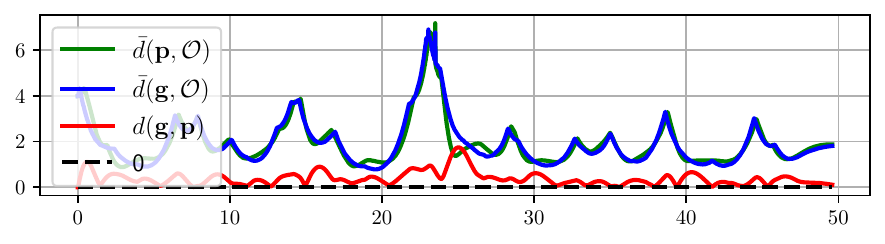}
\end{subfigure}%
\caption{Safe navigation of a hexarotor robot in a forest is demonstrated in (a) and (b). The bottom plots (c) and (d) show the dynamic safety margin $\Delta E$, the scaled desired Hamiltonian $2 \calH_d(\bfx, \bfx^*)/k_\bfp$, and the distance to the obstacles $d(\bfp(t), \calO)$, indicating that the safety constraints are never violated.}
\label{fig:sim_forest}
\end{figure}

This section evaluates our safe tracking controller on a simulated hexarotor UAV using the learned Hamiltonian dynamics in Sec. \ref{subsec:ham_dyn_learning_evaluation}. 
The task is to navigate from a start  \NEWZL{position} to a goal in an environment without colliding with the obstacles. \NEWZL{To guarantee stability and safety, all control gains must be positive.} The following control gains were used with the regulation controller in Sec. \ref{sec:ham_control}: $k_p = 0.25$, $k_\bfR = 125\bfJ$, $k_v = 0.125$, $k_\bfomega = 10\bfJ$ in \eqref{eq:desired_hamiltonian} and $k_\bfg = 1.0$ with the governor control in \eqref{eq:governor}. 
We test the controller in a challenging unknown 3D environment with complex obstacles $\calO$. A simulated LiDAR scanner provides point cloud measurements $\calP(t) := \crl{\bfy_i(t)}$ of the surface of the unsafe set $\calO$, depending on the system pose, with a maximum sensing range of $\beta = 30$. The distance from the governor $\bfg(t)$ to the unsafe set $\calO$ is approximated via $\bar{d}(\bfg(t); \calO) \approx \min_{\bfy \in \calP(t)} \norm{\bfg(t) - \bfy}$. The point clouds $\calP(t)$ are used to construct an occupancy grid map online and a reference path $\bfr$ is replanned periodically using the A* algorithm to ensure that $\bfr(\sigma) \in \intF$. 
\NEWZL{
In this paper, we assume the learned system dynamics are accurate and simulated a noise-free environment. We leave model uncertainty, measurement noise, and external disturbances for future work.
}

Fig.~\ref{fig:hexarotor_sim_mesh} and Fig.~\ref{fig:sim_forest} show the behavior of the closed-loop hexarotor system in \NEWZL{two different} environments. The reference governor follows the projected goal $\bar{\bfg}$ and generates a time-varying equilibrium $\bfx^* = \bfell(\bfg,\bar{\bfg})$ for the hexarotor. The dynamic safety margin $\Delta E(\bfx,\bfx^*)$ fluctuates during this process \NEWZL{but it never becomes negative as seen in the figures}. The augmented system $(\bfx, \bfg)$ is controlled adaptively, slowing down when the dynamic safety margin decreases (e.g., when the robot is close to an obstacle and has large total energy $\calH_d$) and speeding up otherwise (e.g., when the robot is far away from the obstacles or has small total energy $\calH_d$). The simulations show that our control policy successfully drives the system from the start to the end of the reference path while avoiding sensed obstacle online, i.e., $d(\bfp, \calO)$ remains positive throughout the motion.

\section{Conclusion}
\NEWZL{
This paper developed a tracking controller for fully-actuated Hamiltonian systems which enables safe autonomous navigation in unknown environments. 
}
Given only a training set of system state-control trajectories, our approach estimates the system dynamics \NEWZL{accurately using a neural ODE network}
and synthesizes a controller that avoids obstacles based on run-time distance measurements. 
Our method was demonstrated on a simulated hexarotor aerial robot navigating in complex 3D environments. Future work will focus on capturing model uncertainty and external disturbances in the design and deploying it on a hardware platform.

\acks{We gratefully acknowledge support from NSF RI IIS-2007141 and NSF CCF-2112665 (TILOS).}

\appendix
\section{\,}
\label{app:proof_lyap_ham}
\begin{proof}
Let $\bfp_e = \bfp - \bfp^*$ be the position error and 
$\bfR_e \coloneqq \bfR^{*\top} \bfR = 
\begin{bmatrix}
	\bfr_{e1} & \bfr_{e2} & \bfr_{e3}
\end{bmatrix}^\top$ 
be the rotation error, then the error state can be expressed as $\bfx_e \coloneqq (\mathbf\frakq_e, \mathbf\frakp_e)$, where
\begin{equation*}
    \mathbf\frakq_e = \begin{bmatrix} (\bfp - \bfp^*)^\top\quad 	\bfr_{e1}^\top \quad \bfr_{e2}^\top \quad \bfr_{e3}^\top \end{bmatrix}^\top, \quad \mathbf\frakp_e = \mathbf\frakp.
\end{equation*}
Since the system is fully-actuated, \NEWZL{i.e.,} the input matrix $\bfB_\bftheta(\bf\frakq)$ is invertible. The controller in \eqref{eq:stab_ctrl_law} exists and the resulting closed-loop error dynamics becomes \citep{duong21hamiltonian}:
\begin{equation}
\label{eq:desired_port_Hal_dyn_tracking}
\begin{bmatrix}
\dot{\mathbf\frakq}_e \\
\dot{\mathbf\frakp}_e \\
\end{bmatrix}
= \begin{bmatrix}
\bf0 & \bfJ \\
-\bfJ^\top & -\bfK_d\end{bmatrix}
\NEWZL{
\begin{bmatrix}
\nabla_{\frakq_e} \calH_d \\
\nabla_{\frakp_e} \calH_d 
\end{bmatrix}
},
\qquad \bfJ =\begin{bmatrix}
	\bfR^\top\!\!\!\! & \bf0 & \bf0 & \bf0 \\
	\bf0 & \hat{\bfr}_{e1}^\top & \hat{\bfr}_{e2}^\top & \hat{\bfr}_{e3}^\top
	\end{bmatrix}^\top.
\end{equation}
Using group property, rotation error matrix $\bfR_e = \bfR^{*\top}\bfR \in SO(3)$, therefore, $\bfR_e$ is a orthogonal matrix. All columns of $\bfR_e$ are orthonormal and all elements in $\bfR_e$ are less than $1$, hence, $\tr\prl{\bfI - \bfR^{*\top} \bfR} \geq 0$.
Since $\bfM$ is a positive definite matrix, it is easy to see that $\calH_d$ is positive definite, and $0$ minimum value is achieved only at $\bfx^*_e = (\frakq_e, \bf0)$ with $\frakq_e = [\bf0^\top, \bfe_1^\top, \bfe_2^\top, \bfe_3^\top]^\top$.
The time derivative can be computed as:
%
\NEWZL{
\begin{equation}
\begin{aligned}
\dot{\calH}_d(\bfx, \bfx^*) &= \nabla_{\frakq_e} \calH_d^\top\dot{\mathbf\frakq}_e + \nabla_{\frakp_e} \calH_d ^\top \dot{\mathbf\frakp}_e\\
              &= \nabla_{\frakq_e} \calH_d^\top \bfJ \nabla_{\frakp_e} \calH_d - \nabla_{\frakp_e} \calH_d^\top \bfJ^\top  \nabla_{\frakq_e} \calH_d - \nabla_{\frakp_e} \calH_d^\top \bfK_d \nabla_{\frakp_e} \calH_d\\
              & = -\frakp_e^\top \bfM^{-1}(\frakq)\bfK_d \bfM^{-1}(\frakq) \frakp_e.
\end{aligned}
\end{equation}
}
Hence, $\dot{\calH}_d(\bfx, \bfx^*) \leq 0$ for all $\bfx_e$. It is not hard to show that the only point can stay within set 
$\crl{\dot{\calH}_d = 0}$ is at origin. By the LaSalle's invariance principle \citep{khalil2002nonlinear}, the system  \eqref{eq:desired_port_Hal_dyn_tracking} asymptotically stabilizes to desired equilibrium $\bfx^*_e$, i.e. $\bfx^* = (\frakq^*, \bf0)$.
\end{proof}
%
%
\section{\,}
\label{app:proof_safe_stab}
\begin{proof}
From Lemma~\ref{lem:lyap_ham}, we know that for any constant $\bfx^* = (\frakq^*, \bf0)$, $\calH_d(\bfx, \bfx^*)$ is a Lyapunov function to certificate stability of $\bfx^*$. Hence, 
\begin{equation*}
    \calS(\bfx, \bfx^*) = \crl{\bfx \mid \Delta E(\bfx, \bfx^*) \geq 0} = \crl{\bfx \mid H_d(\bfx, \bfx^*) \leq \frac{2 \bar{d}{\,}^2(\bfp^*, \calO)}{k_\bfp}}
\end{equation*}
is forward invariant and control signal $\bfpi \prl{\bfx, \bfx^*}$ can steer the state $\bfp$ of the system towards $\bfp^*$.  It remains to show that during the convergence, constraints over $\bfp(t)$ is never violated. In the proof of Lemma~\ref{lem:lyap_ham}, we have shown that the second term of desired Hamiltonian~\eqref{eq:desired_hamiltonian}, $\frac{1}{2} k_{\bfR}\tr(\bfI - \bfR^{*\top}\bfR)$ is non-negative for all $\bfR^* \in SO(3)$, therefore, 
$\frac{2}{k_\bfp} \mathcal{H}_d \prl{\bfx, \bfx^*} \geq (\bfp - \bfp^*)^\top (\bfp - \bfp^*).$
From \eqref{eq:dyO} and above inequality, we know that $\bfp(t) \in \calF$ for all $t \geq t_0$, since
\begin{equation*}
d^2(\bfp^*, \calO) \geq \bar{d}\,^2(\bfp^*, \calO) \geq \frac{2}{k_\bfp} \mathcal{H}_d \prl{\bfx_0, \bfx^*} \geq \frac{2}{k_\bfp} \mathcal{H}_d \prl{\bfx(t), \bfx^*} \geq d^2 \prl{\bfp^*, \bfp(t)}.
\end{equation*}
\end{proof}

\section{\,}
\label{app:proof_safe_output_tracking}
\begin{proof}
    For conciseness, $\Delta E(t) = \Delta E \prl{\bfx(t), \bfell(\bfg(t), \lpg(t))}$. 
	Initially, $\bfg_0 = \bfp_0 = \bfr(0) \in \LS(\bfx_0,\bfg_0)$ and $\Delta E(t_0) > 0$, the local projected goal $\lpg$ and associated rotation matrix $\bfR^*(\bfg, \lpg)$ is well defined. As $\lpg$ moves along the reference path $\bfr$, i.e., path parameter $\sigma$ in \eqref{eq:lpg} increases. While $\bfg$ chasing $\lpg$ by \eqref{eq:governor}, system state $\bfx$ tracks $\bfx^* = \bfell(\bfg, \lpg)$ using controller $\bfpi(\bfx, \bfx^*)$ in \eqref{eq:stab_ctrl_law}.
	During this process, the safety margin $\Delta E(t)$ is fluctuating which regulates behavior of $\bfg$ through $\lpg$. Since the system dynamics are continuous, $\Delta E(t)$ cannot become negative without crossing $0$ from above at some time $T_0$. As $\Delta E(t) \downarrow 0$, the local safe zone will shrink to a point, i.e., $\LS(\bfx, \bfg) \downarrow \crl{\bfg}$ . This immediately stops the movement of governor because $\lpg = \bfg(T_0)$ and $\dot{\bfg}(T_0) = 0$. 
	From Proposition~\ref{prop:safe_stab}, we know that $\bfx(t)$ will stay within $\calS(\bfx, \bfx^*(T_0))$ for $t \geq T_0$ and position constraints will not be violated. As $\bfx \rightarrow \bfx^*(T_0)$ because $\dot{H}_d(\bfx, \bfx^*(T_0)) < 0$ when $\bfg$ is static and $\bfx \neq \bfx^*(T_0)$, there exists $h > 0$ such that $\Delta E(T_0 + h)$ becomes strictly positive. Hence, the governor is able to  move again towards new $\lpg$ getting further along the path as discussed previously. This process continues until the augmented system stabilized at $\prl{\bfell \prl{\bfr(1), \bfr(1)}, \bfr(1)}$ where $\lpg$ stops changing.
\end{proof}

\bibliography{bib/thai_zhl}

\begin{thebibliography}{37}
\providecommand{\natexlab}[1]{#1}
\providecommand{\url}[1]{\texttt{#1}}
\expandafter\ifx\csname urlstyle\endcsname\relax
  \providecommand{\doi}[1]{doi: #1}\else
  \providecommand{\doi}{doi: \begingroup \urlstyle{rm}\Url}\fi

\bibitem[Ames et~al.(2014{\natexlab{a}})Ames, Galloway, Sreenath, and
  Grizzle]{CBF_ames2014rapidly}
Aaron~D Ames, Kevin Galloway, Koushil Sreenath, and Jessy~W Grizzle.
\newblock {Rapidly exponentially stabilizing control Lyapunov functions and
  hybrid zero dynamics}.
\newblock \emph{IEEE Transactions on Automatic Control (TAC)}, 59\penalty0 (4),
  2014{\natexlab{a}}.

\bibitem[Ames et~al.(2014{\natexlab{b}})Ames, Grizzle, and
  Tabuada]{CBF_ames2014control}
Aaron~D Ames, Jessy~W Grizzle, and Paulo Tabuada.
\newblock {Control barrier function based quadratic programs with application
  to adaptive cruise control}.
\newblock In \emph{IEEE Conference on Decision and Control (CDC)},
  2014{\natexlab{b}}.

\bibitem[Ames et~al.(2017)Ames, Xu, Grizzle, and Tabuada]{CBF_ames2017TAC}
Aaron~D Ames, Xiangru Xu, Jessy~W Grizzle, and Paulo Tabuada.
\newblock {Control barrier function based quadratic programs for safety
  critical systems}.
\newblock \emph{IEEE Transactions on Automatic Control (TAC)}, 62\penalty0 (8),
  2017.

\bibitem[Ames et~al.(2019)Ames, Coogan, Egerstedt, Notomista, Sreenath, and
  Tabuada]{CBF_ames2019ECC}
Aaron~D Ames, Samuel Coogan, Magnus Egerstedt, Gennaro Notomista, Koushil
  Sreenath, and Paulo Tabuada.
\newblock {Control barrier functions: Theory and applications}.
\newblock In \emph{IEEE European Control Conference (ECC)}, 2019.

\bibitem[Arslan and Koditschek(2017)]{RG_Omur_ICRA17}
Omur Arslan and Daniel~E Koditschek.
\newblock {Smooth extensions of feedback motion planners via reference
  governors}.
\newblock In \emph{IEEE International Conference on Robotics and Automation
  (ICRA)}, 2017.

\bibitem[Bemporad(1998)]{RG_bemporad1998reference}
Alberto Bemporad.
\newblock {Reference governor for constrained nonlinear systems}.
\newblock \emph{IEEE Transactions on Automatic Control (TAC)}, 43\penalty0 (3),
  1998.

\bibitem[Borrelli et~al.(2017)Borrelli, Bemporad, and
  Morari]{borrelli_MPC_book}
Francesco Borrelli, Alberto Bemporad, and Manfred Morari.
\newblock \emph{{Predictive control for linear and hybrid systems}}.
\newblock Cambridge University Press, 2017.

\bibitem[Bravo et~al.(2006)Bravo, Alamo, and
  Camacho]{MPC_Automatica_Camacho2006_nlin}
Jos{\'e}~Manuel Bravo, Teodoro Alamo, and Eduardo~F Camacho.
\newblock Robust mpc of constrained discrete-time nonlinear systems based on
  approximated reachable sets.
\newblock \emph{Automatica}, 42\penalty0 (10), 2006.

\bibitem[Chen et~al.(2018)Chen, Rubanova, Bettencourt, and
  Duvenaud]{chen2018neural}
Ricky~TQ Chen, Yulia Rubanova, Jesse Bettencourt, and David Duvenaud.
\newblock Neural ordinary differential equations.
\newblock In \emph{Advances in Neural Information Processing Systems
  (NeurIPS)}, 2018.

\bibitem[Chua et~al.(2018)Chua, Calandra, McAllister, and Levine]{chua2018deep}
Kurtland Chua, Roberto Calandra, Rowan McAllister, and Sergey Levine.
\newblock Deep reinforcement learning in a handful of trials using
  probabilistic dynamics models.
\newblock In \emph{Advances in Neural Information Processing Systems
  (NeurIPS)}, 2018.

\bibitem[Deisenroth et~al.(2015)Deisenroth, Fox, and
  Rasmussen]{deisenroth2015gp}
Marc~Peter Deisenroth, Dieter Fox, and Carl~Edward Rasmussen.
\newblock Gaussian processes for data-efficient learning in robotics and
  control.
\newblock \emph{IEEE Transactions on Pattern Analysis and Machine
  Intelligence}, 37\penalty0 (2):\penalty0 408--423, 2015.
\newblock \doi{10.1109/TPAMI.2013.218}.

\bibitem[Delmerico and Scaramuzza(2018)]{vio_benchmark}
Jeffrey Delmerico and Davide Scaramuzza.
\newblock {A benchmark comparison of monocular visual-inertial odometry
  algorithms for flying robots}.
\newblock In \emph{IEEE International Conference on Robotics and Automation
  (ICRA)}, pages 2502--2509, 2018.

\bibitem[Duong and Atanasov(2021{\natexlab{a}})]{duong2021learning}
Thai Duong and Nikolay Atanasov.
\newblock Adaptive control of {SE(3)} {Hamiltonian} dynamics with learned
  disturbance features.
\newblock \emph{arXiv preprint arXiv:2109.09974}, 2021{\natexlab{a}}.

\bibitem[Duong and Atanasov(2021{\natexlab{b}})]{duong21hamiltonian}
Thai Duong and Nikolay Atanasov.
\newblock {Hamiltonian}-based neural {ODE} networks on the {SE(3)} manifold for
  dynamics learning and control.
\newblock In \emph{Proceedings of Robotics: Science and Systems}, Virtual, July
  2021{\natexlab{b}}.
\newblock \doi{10.15607/RSS.2021.XVII.086}.

\bibitem[Garone and Nicotra(2016)]{RG_garone2016_ERG}
Emanuele Garone and Marco~M Nicotra.
\newblock {Explicit reference governor for constrained nonlinear systems}.
\newblock \emph{IEEE Transactions on Automatic Control (TAC)}, 61\penalty0 (5),
  2016.

\bibitem[Gr{\"u}ne and Pannek(2017)]{NMPC_book}
Lars Gr{\"u}ne and J{\"u}rgen Pannek.
\newblock \emph{{Nonlinear model predictive control}}.
\newblock Springer, 2017.

\bibitem[Herbert et~al.(2017)Herbert, Chen, Han, Bansal, Fisac, and
  Tomlin]{herbert2017fastrack}
Sylvia~L Herbert, Mo~Chen, SooJean Han, Somil Bansal, Jaime~F Fisac, and
  Claire~J Tomlin.
\newblock {FaSTrack: A modular framework for fast and guaranteed safe motion
  planning}.
\newblock In \emph{IEEE Conference on Decision and Control (CDC)}, 2017.

\bibitem[Holm(2008)]{HolmBook}
Darryl~D Holm.
\newblock \emph{Geometric Mechanics}.
\newblock World Scientific Publishing Company, 2008.

\bibitem[Kabzan et~al.(2019)Kabzan, Hewing, Liniger, and
  Zeilinger]{kabzan2019learning}
Juraj Kabzan, Lukas Hewing, Alexander Liniger, and Melanie~N. Zeilinger.
\newblock Learning-based model predictive control for autonomous racing.
\newblock \emph{IEEE Robotics and Automation Letters}, 4\penalty0 (4):\penalty0
  3363--3370, 2019.
\newblock \doi{10.1109/LRA.2019.2926677}.

\bibitem[Khalil(2002)]{khalil2002nonlinear}
Hassan Khalil.
\newblock \emph{{Nonlinear systems}}.
\newblock Prentice Hall, 2002.

\bibitem[Kolmanovsky et~al.(2014)Kolmanovsky, Garone, and
  Di~Cairano]{RG_kolmanovsky2014ACC_tutorial}
Ilya Kolmanovsky, Emanuele Garone, and Stefano Di~Cairano.
\newblock {Reference and command governors: A tutorial on their theory and
  automotive applications}.
\newblock In \emph{IEEE American Control Conference (ACC)}, 2014.

\bibitem[Kousik et~al.(2020)Kousik, Vaskov, Bu, Johnson-Roberson, and
  Vasudevan]{RTD_kousik2018bridging}
Shreyas Kousik, Sean Vaskov, Fan Bu, Matthew Johnson-Roberson, and Ram
  Vasudevan.
\newblock {Bridging the gap between safety and real-time performance in
  receding-horizon trajectory design for mobile robots}.
\newblock \emph{The International Journal of Robotics Research (IJRR)}, 2020.

\bibitem[Lee et~al.(2017)Lee, Leok, and McClamroch]{lee2017global}
Taeyoung Lee, Melvin Leok, and N~Harris McClamroch.
\newblock \emph{{Global formulations of Lagrangian and Hamiltonian dynamics on
  manifolds}}.
\newblock Springer, 2017.

\bibitem[Li et~al.(2020)Li, Arslan, and Atanasov]{Li_SafeControl_ICRA20}
Zhichao Li, Omur Arslan, and Nikolay Atanasov.
\newblock {Fast and Safe Path-Following Control using a State-Dependent
  Directional Metric}.
\newblock In \emph{IEEE International Conference on Robotics and Automation
  (ICRA)}, 2020.

\bibitem[Li et~al.(2021)Li, Duong, and Atanasov]{refgovham_extended}
Zhichao Li, Thai Duong, and Nikolay Atanasov.
\newblock Safe autonomous navigation for systems with learned {SE(3)
  Hamiltonian} dynamics.
\newblock \emph{arXiv preprint arXiv:2112.04639}, 2021.

\bibitem[Lurie(2013)]{lurie2013analytical}
Anatolii~Isakovich Lurie.
\newblock \emph{Analytical mechanics}.
\newblock Springer Science \& Business Media, 2013.

\bibitem[Lutter et~al.(2019)Lutter, Listmann, and
  Peters]{lutter2019deepunderactuated}
M~Lutter, K~Listmann, and J~Peters.
\newblock Deep lagrangian networks for end-to-end learning of energy-based
  control for under-actuated systems.
\newblock In \emph{IEEE/RSJ International Conference on Intelligent Robots and
  Systems (IROS)}, 2019.

\bibitem[Majumdar and Tedrake(2017)]{Funnel_lib}
Anirudha Majumdar and Russ Tedrake.
\newblock {Funnel libraries for real-time robust feedback motion planning}.
\newblock \emph{The International Journal of Robotics Research (IJRR)},
  36\penalty0 (8), 2017.

\bibitem[Mayne et~al.(2000)Mayne, Rawlings, Rao, and
  Scokaert]{MPC_mayne2000constrained}
David~Q Mayne, James~B Rawlings, Christopher~V Rao, and Pierre~OM Scokaert.
\newblock {Constrained model predictive control: Stability and optimality}.
\newblock \emph{Automatica}, 36\penalty0 (6), 2000.

\bibitem[Mohamed et~al.(2019)Mohamed, Haghbayan, Westerlund, Heikkonen,
  Tenhunen, and Plosila]{OdometrySurvey}
Sherif~AS Mohamed, Mohammad-Hashem Haghbayan, Tomi Westerlund, Jukka Heikkonen,
  Hannu Tenhunen, and Juha Plosila.
\newblock {A survey on odometry for autonomous navigation systems}.
\newblock \emph{IEEE Access}, 7:\penalty0 97466--97486, 2019.

\bibitem[Raissi et~al.(2018)Raissi, Perdikaris, and
  Karniadakis]{raissi2018multistep}
Maziar Raissi, Paris Perdikaris, and George~Em Karniadakis.
\newblock Multistep neural networks for data-driven discovery of nonlinear
  dynamical systems.
\newblock \emph{arXiv preprint arXiv:1801.01236}, 2018.

\bibitem[Rajappa et~al.(2015)Rajappa, Ryll, B{\"u}lthoff, and
  Franchi]{}
Sujit Rajappa, Markus Ryll, Heinrich~H B{\"u}lthoff, and Antonio Franchi.
\newblock {Modeling, control and design optimization for a fully-actuated
  hexarotor aerial vehicle with tilted propellers}.
\newblock In \emph{IEEE International Conference on Robotics and Automation
  (ICRA)}, pages 4006--4013, 2015.

\bibitem[Shalev-Shwartz et~al.(2016)Shalev-Shwartz, Shammah, and
  Shashua]{shalev2016safe}
Shai Shalev-Shwartz, Shaked Shammah, and Amnon Shashua.
\newblock Safe, multi-agent, reinforcement learning for autonomous driving.
\newblock \emph{arXiv preprint arXiv:1610.03295}, 2016.

\bibitem[Van Der~Schaft and Jeltsema(2014)]{van2014port}
Arjan Van Der~Schaft and Dimitri Jeltsema.
\newblock Port-{H}amiltonian systems theory: An introductory overview.
\newblock \emph{Foundations and Trends in Systems and Control}, 1\penalty0
  (2-3), 2014.

\bibitem[Xu(2018)]{CBF_xu2018constrained}
Xiangru Xu.
\newblock Constrained control of input--output linearizable systems using
  control sharing barrier functions.
\newblock \emph{Automatica}, 87:\penalty0 195--201, 2018.

\bibitem[Yip and Camarillo(2014)]{yip2014model}
Michael~C Yip and David~B Camarillo.
\newblock {Model-less feedback control of continuum manipulators in constrained
  environments}.
\newblock \emph{IEEE Transactions on Robotics}, 30\penalty0 (4):\penalty0
  880--889, 2014.

\bibitem[Zhong et~al.(2019)Zhong, Dey, and Chakraborty]{zhong2020symplectic}
Yaofeng~Desmond Zhong, Biswadip Dey, and Amit Chakraborty.
\newblock Symplectic {ODE}-{N}et: learning {H}amiltonian dynamics with control.
\newblock In \emph{International Conference on Learning Representations
  (ICLR)}, 2019.

\end{thebibliography}
\end{document}